\newcolumntype{L}[1]{>{\raggedright\arraybackslash}p{#1}}
\newcolumntype{C}[1]{>{\centering\arraybackslash}p{#1}}
\newcolumntype{R}[1]{>{\raggedleft\arraybackslash}p{#1}}
\newcommand{\Id}{\textnormal{Id}}
\newcommand{\disc}{\textnormal{disc}}
\newcommand{\rdisc}{\textnormal{q-disc}}
\newtheorem{definition}{Definition}
\newtheorem{theorem}{Theorem}
\newtheorem{corollary}{Corollary}
\newtheorem{lemma}{Lemma}
\newtheorem{assumption}{Assumption}
\title{A Theory of Output-Side Unsupervised Domain Adaptation}
\author{Tomer Galanti, Lior Wolf}
\date{}
\begin{document}

\tikzstyle{b} = [rectangle, draw, fill=blue!20, node distance=3cm, text width=6em, text centered, rounded corners, minimum height=4em, thick]
\tikzstyle{c} = [rectangle, draw, inner sep=0.5cm, dashed]
\tikzstyle{l} = [draw, -latex',thick]

\maketitle

\begin{abstract}

When learning a mapping from an input space to an output space, the assumption that the sample distribution of the training data is the same as that of the test data is often violated. Unsupervised domain shift methods adapt the learned function in order to correct for this shift. Previous work has focused on utilizing unlabeled samples from the target distribution. We consider the complementary problem in which the unlabeled samples are given post mapping, i.e., we are given the outputs of the mapping of unknown samples from the shifted domain. Two other variants are also studied: the two sided version, in which unlabeled samples are give from both the input and the output spaces, and the Domain Transfer problem, which was recently formalized. In all cases, we derive generalization bounds that employ discrepancy terms.
\end{abstract} 

\section{Introduction}

In the unsupervised domain adaptation problem~\cite{Crammer:2008:LMS:1390681.1442790,DBLP:conf/colt/MansourMR09,DBLP:journals/ml/Ben-DavidBCKPV10}, the algorithm trains a hypothesis on a source domain and the hypothesis is tested on a similar yet different target domain. The algorithm is aided with a labeled dataset of the source domain and an unlabeled dataset of the target domain. The conventional approach to dealing with this problem is to learn a feature map that (i) enables accurate  classification in the source domain and (ii) captures meaningful invariant relationships between the source and target domains. 

The standard unsupervised domain adaptation problem does not capture the scenario in which the orientation is performed with the output. In such scenarios, the set of unlabeled samples of the target domain is replaced by  the labels of such a set. In other words, the learning algorithm receives a labeled dataset in the source domain and a dataset of the outputs of the target function to learn on random samples from the target distribution. As far as we know, this problem is novel despite being ecological (i.e., appearing naturally in the real-world), widely applicable and likely to take place in cognitive reasoning.

As a motivating example, consider a system that learns to map data about houses to their market prices. The system is then asked to adapt to the segment of the market in which prices are in a certain range. Houses out of this range can still serve as valuable examples for recovering the ``regression coefficients'' of streets, neighborhoods and the number of rooms. Adaptation is expected to outperform a simple filtering of the dataset. 

The new problem, which we call Output-Side Domain Adaptation (ODA). Underlies real-world AI challenges that humans deal with. Consider an AI agent that learns how to program Java through examples of programming challenges (specifications) and their solutions (Java code). The agent is also presented with a large corpus of C$\#$ code and is required to adapt to C$\#$ programming. 

The main tool that we apply in order to analyze the ODA problem is discrepancy, which is already in wide use in the study of standard unsupervised domain adaptation~\cite{DBLP:journals/ml/Ben-DavidBCKPV10,DBLP:conf/alt/Mansour09}. Recently, ~\cite{icml2015_ganin15, Ganin:2016:DTN:2946645.2946704} tied the notion of discrepancy to the GAN method~\cite{DBLP:conf/nips/GoodfellowPMXWOCB14}. It was shown that GANs can implement these discrepancies very effectively and that the combination of GANs with domain adaptation led to an improved accuracy in comparison with other recent approaches.

In addition to ODA we also study the two-sided version, in which we are given two sets of unmatched samples in both the source and the target domain: one for input samples and one for output samples. Interestingly, the generalization bound we derive motivates the recent CoGAN method of~\cite{cogan}. We then employ the same tools in order to study a third problem in which the output distribution is given in an unsupervised manner, namely the problem of unsupervised cross domain sample generation~\cite{DBLP:journals/corr/TaigmanPW16}. In this problem, two unsupervised sets are provided, one containing a set of samples from the input domain and another from the output domain. In addition, some metric that can compare samples between the two domains is given. The task is to build a mapping between the two domains such that this metric is minimized. Similar to the ODA problem, we use discrepancies in order to derive generalization bounds for this problem thus providing theoretical foundations to the DTN algorithm of~\cite{DBLP:journals/corr/TaigmanPW16}.

\section{Preliminaries} 
Our work has close ties to the classical work on domain adaptation, which we review below. We also review GANs through the lens of discrepancy.

\subsection{Unsupervised domain adaptation}

A {\em Domain Adaptation Setting} is specified by a tuple $(\mathcal{H}_1, \mathcal{H}_2,\mathcal{Z},\ell)$, consisting of: a set of feature maps $\mathcal{H}_1 = \{h_1:\mathcal{X} \rightarrow \mathcal{F}\}$, a set of classifiers, $\mathcal{H}_2 = \{h_2:\mathcal{F} \rightarrow \mathcal{Y}\}$, a set of samples $\mathcal{Z} = \mathcal{X}\times \mathcal{Y}$ and a loss function $\ell: \mathcal{Y}\times\mathcal{Y} \rightarrow \mathbb{R}_+$. In this model, the hypothesis class is,
\begin{small}
\begin{equation}
\mathcal{H} := \mathcal{H}_2 \circ \mathcal{H}_1 := \left\{g\circ f \Big\vert f\in \mathcal{H}_1, g \in \mathcal{H}_2 \right\}
\end{equation}
\end{small}
Each hypothesis $h \in \mathcal{H}$ is decomposed into a feature map $f$ and a classifier $g$. The feature map $f$ takes inputs $x \in \mathcal{X} \subset \mathbb{R}^{d_1}$ and represents them as vectors in the feature space, $\mathcal{F}$. Subsequently, the classifier, $g$, takes inputs from the feature space and maps them to labels in $\mathcal{Y} \subset \mathbb{R}^{d_2}$. 

We assume a source domain and a target domain (a distribution over $\mathcal{X}$ along with a function $\mathcal{X} \rightarrow \mathcal{Y}$) $(D_S,y_S)$ and $(D_T,y_T)$ (respectively). The fitting of each hypothesis $h \in \mathcal{H}$ is measured by the {\em Target Generalization Risk}, $R_{D_T}[h,y_T]$. Where, the {\em Generalization Risk} is defined as 
\begin{small}
$R_{D}[h_1,h_2]=\mathbb{E}_{x\sim D_T}\left[\ell(h_1(x),h_2(x))\right]$.
\end{small}
Here, $\mathcal{H}$, $\mathcal{Z}$, and $\ell$ are known to the learner. The distributions $D_S$, $D_T$ and the target function $y_T:\mathcal{X} \rightarrow \mathcal{Y}$ are unknown to the learner. The goal of the learner is to pick $h \in \mathcal{H}$ that optimizes
\begin{small}
$\inf_{h \in \mathcal{H}} R_{D_T}[h,y_T]$.
\end{small}
Since the target function, $y_T$, and the target distribution, $D_T$, are unknown, this quantity cannot be computed directly. 

In most of the machine learning literature, the learning algorithm is being trained and tested on the target distribution. In domain adaptation, the learning algorithm is being trained on labeled samples from the source domain and unlabeled samples from the target domain. Formally, the learner is provided with the following two datasets, 
\begin{small}
\begin{equation}
\begin{aligned}
\{(x_i,y_S(x_i))\}^m_{i=1} \text{ such that } x_i \stackrel{\textnormal{i.i.d}}{\sim} D_S  \\
\{x_i\}^{n}_{i=1} \text{ such that } x_i \stackrel{\textnormal{i.i.d}}{\sim} D_T.
\end{aligned}
\end{equation}
\end{small}
See Fig.~\ref{fig:illustration}(a) for an illustration.

In many machine learning settings that require minimizing a generalization risk, it is approximated with its corresponding {\em Empirical Risk}
\begin{small}
$\hat R_{D}[h,y]=\frac{1}{m}\sum^{m}_{i=1}\ell(h(x_i),y(x_i))$. 
\end{small}
For a dataset $\{(x_i,y(x_i))\}^m_{i=1}$ such that $x_i \stackrel{\textnormal{i.i.d}}{\sim} D$. In several domain adaptation settings, the algorithm minimizes the {\em Source Generalization Risk} and the distance between the two domains. In order to approximate the source generalization risk, we make use of the {\em Source Empirical Risk}, $\hat R_{D_S}[h,y_S]$.

\begin{figure*}[t]
\centering
\begin{tabular}{cc}
\begin{tabular}{|c|C{2.6cm}C{2.8cm}|}
\hline
      & Input & Output\\
      \hline
    $1^{st}$  &   \begin{small}$\{x_i\sim D_S\}$\end{small} & \begin{small}$\{y_S(x_i)\}$\end{small} \\ 
    $2^{st}$  &   \begin{small}$\{x_j\sim D_T\}$\end{small}\tikzmark{a} & \tikzmark{b}\\
\hline
  \end{tabular}
  
  &
  \begin{tabular}{|c|C{2.6cm}C{2.8cm}|}
\hline
       & Input & Output\\
      \hline
    $1^{st}$  &   \begin{small}$\{x_i\sim D_S\}$\end{small} & \begin{small}$\{y_S(x_i)\}$\end{small} \\ 
    $2^{st}$  &   \tikzmark{a} & \begin{small}$\{y_T(x_j) | x_j\sim D_T\}$\end{small}\tikzmark{b}\\
   \hline
  \end{tabular}
  
  \\

    (a) & (b)\\
     \begin{tabular}{|c|C{2.6cm}C{2.8cm}|}
\hline
       & Input & Output\\
      \hline
    $1^{st}$  &   \begin{small}$\{x_i\sim D_1\}$\end{small} & \begin{small}$\{y_1(x_j) | x_j \sim D_1\}$\end{small} \\ 
    $2^{st}$  &\begin{small}$\{x_k\sim  D_2\}$\end{small}\tikzmark{a} & \begin{small}$\{y_2(x_l) | x_l\sim D_2\}$\end{small}\tikzmark{b}\\
   \hline
  \end{tabular}
  &
  \begin{tabular}{|c|C{2.6cm}C{2.8cm}|}
\hline
       & Input & Output\\
      \hline
    $1^{st}$  &   \begin{small}$\{x_i\sim D_1\}$ \end{small}&  \\ 
    $2^{st}$  &   \tikzmark{a} & \tikzmark{b}\begin{small}$\{y(x_j)| x_j\sim D_2\}$\end{small}\\
    \hline
  \end{tabular}

  \\
  (c) & (d)\\
\end{tabular}
\caption{\label{fig:illustration} A comparison of the various domain shift models discussed in this work. (a) The conventional unsupervised domain adaptation problem. The algorithm learns a function $y_T$ from samples $\{(x_i\sim D_S,y_S(x_i))\}^{m}_{i=1}$ and $\{x_i\sim D_T\}^{n}_{i=1}$. (b) The output-side unsupervised domain adaptation problem. Instead of $\{x_j\sim D_T\}^{n}_{i=1}$, the algorithm is provided with $\{y_T(x_j)\sim D^y_T\}^{n}_{i=1}$. (c) In the two sided variant, the goal is to learn $y_T$ given samples $\{x_i\sim D_1\}^{m_1}_{i=1}$, $\{y_1(x_j) | x_j \sim D_1\}^{m_2}_{j=1}$,  $\{x_k\sim D_2\}^{n_1}_{k=1}$ and $\{y_2(x_l) | x_l \sim D_2\}^{n_2}_{l=1}$.  (d) The unsupervised domain transfer problem. In this case, the algorithm learns a function $y$ and is being tested on $D_1$. The algorithm is aided with two datasets: $\{x_i \sim D_1\}^{m}_{i=1}$ and $\{y(x_j) \sim D^y_2\}^{n}_{j=1}$. }
\end{figure*}

\paragraph{Distances between distributions}

Different methodologies for domain adaptation exist in the literature. In the unsupervised domain adaptation model, the learning algorithm uses the source dataset in order to learn a hypothesis that fits it and the unlabeled target dataset in order to measure and restrict closeness between the source and target distributions. In \cite{DBLP:conf/colt/MansourMR09} and \cite{DBLP:journals/ml/Ben-DavidBCKPV10}, it is assumed that $\mathcal{H}_1$ consists of only one representation function $f$ such that $f \circ D_S$ and $f \circ D_T$ are close in some sense. In \cite{icml2015_ganin15} the algorithm learns $f \in \mathcal{H}_1$ such that $f \circ D_S$ and $f \circ D_T$ are close and $h = g\circ f$ fits the source task well. 
Therefore, a critical component in domain adaptation is the ability to restrict the source and target domains to be close by some distance. Different definitions of distance were suggested in the literature. For example, in the context of binary classification, \cite{DBLP:journals/ml/Ben-DavidBCKPV10} explained that the $\mathcal{H}$-divergence distance is more appealing than the total variation (TV) distance. In addition, \cite{DBLP:conf/colt/MansourMR09} extended the discussion regarding the $\mathcal{H}$-divergence distance to the more general notion of discrepancy distance in order to deal with regression tasks.  

\begin{definition}[Discrepancy distance]\label{def:discrepancy}
Let $\mathcal{C}$ be a class of functions from $A$ to $B$ and let $\ell: B\times B  \rightarrow \mathbb{R}_+$ be a loss function over $B$. The discrepancy distance $\disc_{\mathcal{C}}$ between two distributions $D_1$ and $D_2$ over $A$ is defined as follows,
\begin{small}
\begin{equation}\label{eq:disc}
\disc_{\mathcal{C}}(D_1,D_2) = \sup_{c_1,c_2 \in \mathcal{C}}\Big\vert R_{D_1}[c_1,c_2]-R_{D_2}[c_1,c_2]\Big\vert
\end{equation}
\end{small}
\end{definition}

\paragraph{Generalization bounds}

We next review the bounds provided by \cite{DBLP:conf/colt/MansourMR09} and \cite{DBLP:journals/ml/Ben-DavidBCKPV10}. In the following sections, we will compare them to the results proposed in the current work.
The bounds are presented in a slightly modified version in order to support such a comparison and are illustrated in Fig.~\ref{fig:da}.

\begin{theorem}[\cite{DBLP:conf/colt/MansourMR09}]\label{thm:mansour} Let $\mathcal{H} = \mathcal{H}_2 \circ \mathcal{H}_1$. Assume that the loss function $\ell$ is symmetric and obeys the triangle inequality. Then, for any hypothesis $h = g\circ f \in \mathcal{H}$, the following holds
\begin{small}
\begin{equation}
\begin{aligned}
R_{D_T}[h,y_T] \leq& R_{D_S}[h,h^{*}_S]+R_{D_T}[h^*_T,y_T] \\
&+ R_{D_S}[h^*_S,h^*_T]\\
&+ \disc_{\mathcal{H}_2}(f \circ D_S, f\circ D_T)
\end{aligned}
\end{equation}
\end{small}
Here, $h^*_T := g^{*}_T \circ f := \arg\min_{h \in \mathcal{H}_2 \circ f} R_{D_T}\left[h,y_T\right]$ and $h^*_S := g^{*}_S \circ f$ is the same for the source domain $(D_S,y_S)$.
\end{theorem}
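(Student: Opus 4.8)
The plan is to chain two applications of the triangle inequality for the risk functional together with a single application of the definition of discrepancy (Definition~\ref{def:discrepancy}). The starting point is the observation that, because $\ell$ is symmetric and obeys the triangle inequality on $\mathcal{Y}\times\mathcal{Y}$, the map $(h_1,h_2)\mapsto R_D[h_1,h_2]=\mathbb{E}_{x\sim D}[\ell(h_1(x),h_2(x))]$ is a pseudometric on any collection of functions $\mathcal{X}\to\mathcal{Y}$: symmetry is immediate, and the triangle inequality follows by integrating the pointwise bound $\ell(h_1(x),h_3(x))\le \ell(h_1(x),h_2(x))+\ell(h_2(x),h_3(x))$ against $D$ and using linearity of expectation. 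I would record this as a preliminary remark, since it is invoked twice; note that it applies even though $y_T$ need not lie in $\mathcal{H}$, as it only requires the functions involved to be $\mathcal{Y}$-valued.

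First I would split off the target approximation error: the triangle inequality for $R_{D_T}$ gives $R_{D_T}[h,y_T]\le R_{D_T}[h,h^*_T]+R_{D_T}[h^*_T,y_T]$, which already produces the term $R_{D_T}[h^*_T,y_T]$. It remains to control $R_{D_T}[h,h^*_T]$. Here I would use that $h=g\circ f$ and $h^*_T=g^*_T\circ f$ share the feature map $f$, so pushing $D_T$ forward through $f$ yields $R_{D_T}[g\circ f,g^*_T\circ f]=R_{f\circ D_T}[g,g^*_T]$ with $g,g^*_T\in\mathcal{H}_2$. Since both arguments lie in $\mathcal{H}_2$, Definition~\ref{def:discrepancy} applied to the pushed-forward distributions $f\circ D_S$ and $f\circ D_T$ gives $R_{f\circ D_T}[g,g^*_T]\le R_{f\circ D_S}[g,g^*_T]+\disc_{\mathcal{H}_2}(f\circ D_S,f\circ D_T)$.

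Finally I would translate back: $R_{f\circ D_S}[g,g^*_T]=R_{D_S}[h,h^*_T]$, and apply the triangle inequality a second time, now on the source domain, to obtain $R_{D_S}[h,h^*_T]\le R_{D_S}[h,h^*_S]+R_{D_S}[h^*_S,h^*_T]$. Combining the three inequalities yields exactly the claimed bound. I do not expect a genuine obstacle; the only points needing care are (i) deriving the pseudometric property of $R_D$ cleanly from the symmetry and triangle-inequality hypotheses on $\ell$, and (ii) invoking the discrepancy supremum with precisely the class $\mathcal{H}_2$ and the distributions $f\circ D_S,f\circ D_T$, which is what makes the cross term $R_{D_S}[h,h^*_T]$ convert into the discrepancy plus a source-side quantity. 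If one wishes to be fully rigorous about $h^*_T$ and $h^*_S$ being attained minimizers rather than infima, one replaces them by near-optimizers and lets the slack tend to zero at the end.
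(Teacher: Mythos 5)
Your proof is correct. The paper states Theorem~\ref{thm:mansour} as a cited result from the reference and gives no proof of its own, so there is nothing in-paper to diverge from; your derivation --- split off $R_{D_T}[h^*_T,y_T]$ by the triangle inequality, push $D_T$ and $D_S$ through the shared feature map $f$ so that $R_{f\circ D_T}[g,g^*_T]$ and $R_{f\circ D_S}[g,g^*_T]$ are compared via $\disc_{\mathcal{H}_2}(f\circ D_S,f\circ D_T)$, then apply the triangle inequality once more on the source side --- is the standard argument and is exactly the same chaining pattern the authors use in their own proofs of Theorems~\ref{thm:main1} and~\ref{thm:main2}.
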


We also provide a general variation of the original bound proposed by \cite{DBLP:journals/ml/Ben-DavidBCKPV10}.

\begin{theorem}[\cite{DBLP:journals/ml/Ben-DavidBCKPV10}]\label{thm:bendavid} Let $(\mathcal{H}_1, \mathcal{H}_2, \ell, \mathcal{Z})$ be a binary classification domain adaptation setting (i.e, $\ell$ is the 0-1 loss and $\mathcal{Y}=\{0,1\}$). Assume that $y:=y_S=y_T$. Then, for any hypothesis $h = g\circ f \in \mathcal{H}$, 
\begin{small}
\begin{equation}
\begin{aligned}
R_{D_T}[h,y] \leq& R_{D_S}[h,y]+ \disc_{\mathcal{H}_2}(f \circ D_S, f\circ D_T) +\lambda
\end{aligned}
\end{equation}
\end{small}
Where, \begin{small}$\lambda=\min_{g\in \mathcal{H}_2}\left\{\left[R_{D_T}[g\circ f,y] + R_{D_S}[g\circ f,y]\right] \right\}$\end{small}.
\end{theorem}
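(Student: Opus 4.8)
The plan is to fix a hypothesis $h = g\circ f \in \mathcal{H}$ and introduce an auxiliary hypothesis $h^* = g^*\circ f$, where $g^* \in \mathcal{H}_2$ is a minimizer realizing $\lambda$, i.e. $\lambda = R_{D_T}[h^*,y] + R_{D_S}[h^*,y]$. First I would record the fact that, since $\ell$ is the $0$-$1$ loss on $\{0,1\}$, it is symmetric and satisfies the triangle inequality $\ell(a,c)\le \ell(a,b)+\ell(b,c)$; taking expectations, the map $(h_1,h_2)\mapsto R_D[h_1,h_2]$ is then a pseudo-metric on functions into $\mathcal{Y}$ for every fixed distribution $D$. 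Applying this triangle inequality on the target domain gives
$R_{D_T}[h,y] \leq R_{D_T}[h,h^*] + R_{D_T}[h^*,y]$,
which isolates the term $R_{D_T}[h,h^*]$ comparing two hypotheses that share the same feature map $f$.

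The second step transports $R_{D_T}[h,h^*]$ to the source domain via the discrepancy. Since $h = g\circ f$ and $h^* = g^*\circ f$ with $g,g^*\in\mathcal{H}_2$, and since the pushforward identity gives $R_{D}[g\circ f,\,g'\circ f] = R_{f\circ D}[g,g']$ for any $g,g'$, Definition~\ref{def:discrepancy} applied to the distributions $f\circ D_S$ and $f\circ D_T$ over $\mathcal{F}$ yields
$\bigl| R_{D_S}[h,h^*] - R_{D_T}[h,h^*] \bigr| \leq \disc_{\mathcal{H}_2}(f\circ D_S, f\circ D_T)$,
and hence $R_{D_T}[h,h^*] \leq R_{D_S}[h,h^*] + \disc_{\mathcal{H}_2}(f\circ D_S, f\circ D_T)$.

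Finally I would invoke the triangle inequality once more, this time on the source domain, together with symmetry of $\ell$, to bound $R_{D_S}[h,h^*] \leq R_{D_S}[h,y] + R_{D_S}[y,h^*] = R_{D_S}[h,y] + R_{D_S}[h^*,y]$. Chaining the three inequalities and using $R_{D_T}[h^*,y] + R_{D_S}[h^*,y] = \lambda$ produces exactly the claimed bound. I do not expect a serious obstacle here; the points that need care are (i) checking that the $0$-$1$ loss genuinely satisfies the triangle inequality so that the pseudo-metric manipulations are justified — this is where the binary-classification hypothesis is used — and (ii) ensuring the discrepancy is invoked with both arguments ranging within $\mathcal{H}_2$, which holds because the feature map $f$ is held fixed and only the classifier component varies between $h$ and $h^*$. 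Relative to Theorem~\ref{thm:mansour}, the argument is shorter precisely because the assumption $y_S = y_T$ lets the two separate best-in-class terms collapse into the single quantity $\lambda$.
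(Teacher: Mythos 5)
Your proposal is correct: the three-step chain (triangle inequality on $D_T$, transport of $R_{D_T}[h,h^*]$ to $R_{D_S}[h,h^*]$ via $\disc_{\mathcal{H}_2}(f\circ D_S, f\circ D_T)$ using the fact that $g,g^*\in\mathcal{H}_2$ act on the pushforward distributions, then triangle inequality plus symmetry on $D_S$) is exactly the standard argument for this bound. The paper itself states this theorem without proof, citing Ben-David et al., and your derivation matches the classical one from that reference; the only cosmetic point is that if the minimum defining $\lambda$ is not attained one should take a near-minimizer and pass to the infimum at the end.
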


\subsection{Unsupervised domain adaptation and GANs}
\label{sec:pregan}

Generative Adversarial Networks (GANs) were first proposed by~\cite{DBLP:conf/nips/GoodfellowPMXWOCB14}. The idea behind GANs involves learning a generative model through an adversarial process, in which two models are trained simultaneously: a generative model $f \in \mathcal{H}_G$
that captures the data distribution, and a discriminative model $d \in \mathcal{H}_D$ that estimates the probability that a sample came from the training data rather than $f$. The training
procedure for $f$ is to maximize the probability of $d$ making a mistake. In other words, $f$ and $d$ play the following two-player minimax game, 
\begin{small}
\begin{equation}\label{eq:GAN}
\begin{aligned}
\min_{f \in \mathcal{H}_G}\max_{d \in \mathcal{H}_D}& \mathbb{E}_{z \sim D_S}[\log(1-d(f(z)))] \\
&+ \mathbb{E}_{x \sim D_T}[\log(d(x))],
\end{aligned}
\end{equation}
\end{small}
where $d(x)$ is the probability that the classifier $d$ assigns to sample $x$ being a ``real'' sample from the distribution $D_T$, rather than a ``fake'' sample generated by $f$ to some random input $z$ from the distribution $D_S$.

Both Thm.~\ref{thm:mansour} and~\ref{thm:bendavid}  motivate the following optimization criterion, which was investigated by~\cite{icml2015_ganin15}.
\begin{small}
\begin{equation}\label{eq:alg1}
\begin{aligned}
\arg\min_{f,g} \hat R_{D_S}[g\circ f,y]+\disc_{\mathcal{H}_2}(f\circ \hat D_S, f\circ \hat D_T)
\end{aligned}
\end{equation}
\end{small}

It is shown that GANs and the discrepancy distance are closely tied with each other. Specifically the two classifier $c_1$ and $c_2$ in Eq.~\ref{eq:disc} can be replaced with a binary classifier $d$ from the class $\mathcal C\Delta \mathcal C:=\{[c_1(x) \neq c_2(x)]  | c_1,c_2 \in \mathcal{C}\}$, i.e., the class of functions that check equality between pairs of functions in $\mathcal{C}$.

In the general case, one way to connect GANs and discrepancy is through $f$-divergences as shown in~\cite{DBLP:conf/nips/NowozinCT16}. Specifically, both GANs and discrepancies are special cases of a lower bound that is due to~\cite{DBLP:journals/tit/NguyenWJ10} for $f$-divergences between distributions. In particular, discrepancy is the instantiation of the bound for the TV-distance and GAN is the analog for another specific form of $f$-divergence.

\begin{figure}[t]
\begin{center}
\begin{tikzpicture}[auto,scale=0.8]
    \node (rep1) {$f\circ D_S$};
    \node (rep2) at ([shift={(0,-1.5)}] rep1) {$f\circ D_T$};
    \node (out1) at ([shift={(3.5,0)}] rep1) {$h \circ D_S$};  
    \node (in1) at ([shift={(-3.5,0)}] rep1)   {$D_S$};
    \node (in2) at ([shift={(-3.5,-1.5)}] rep1) {$D_T$};
    \node (out3) at ([shift={(3.5,1.5)}] rep1) {$D^y_S$};
	\path[-stealth] (rep1) edge (rep2) ;
    \path[-stealth] (rep2) edge node [right] {disc} (rep1) ;
    \path[-stealth] (out3) edge node [right] {risk} (out1) ;
    \path[-stealth] (out1) edge (out3) ;
    \draw[->] (rep1)  -- (out1) node[midway] {$g$};

    \draw[->] (in1) -- (rep1) node[midway] {$f$};
    \draw[->] (in2) -- (rep2) node[midway] {$f$};
    \path [l] (in1.north) -- ++(0,1.1633)  -- node[pos=0.55] {$y_S$} (out3.west) ;
\end{tikzpicture}
\end{center}
\caption{\label{fig:da} Unsupervised domain adaptation. Each node contains a distribution, the horizontal edges denote the mappings between the distributions and the learned function is $h = g \circ f$. The vertical edges denote the discrepancy between the the two distributions $f\circ D_S$ and $f\circ D_T$ and the risk between $y$ and $h$ on $D_S$.}
\end{figure}
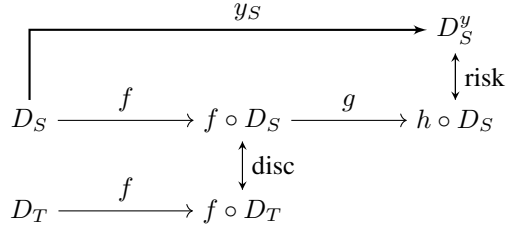

\section{Output-side domain adaptation}

We next present the problem setup of output-side domain adaptation, which is a new variant of unsupervised domain shift problems.

The major difference between the conventional unsupervised domain adaptation and the new variant  is that instead of letting the learner access a dataset of i.i.d unlabeled target instances (in addition to a dataset of i.i.d labeled samples from the source domain), it has access to a dataset of output labels that correspond to i.i.d instances from the target distribution. Formally, the learner is provided with the following two datasets, as illustrated in Fig.~\ref{fig:illustration}(b)
\begin{small}
\begin{equation}
\begin{aligned}
\{(x_i,y_S(x_i))\}^m_{i=1} &\text{ such that } x_i \stackrel{\textnormal{i.i.d}}{\sim} D_S  \\
\{y_T(x_j)\}^{n}_{j=1} &\text{ such that } x_j \stackrel{\textnormal{i.i.d}}{\sim} D_T
\end{aligned}
\end{equation}
\end{small}
We will use the notation $t \sim D^y_T := y_T \circ D_T$ to denote $t=y_T(x)$ where $x \sim D_T$. $D^y_S:= y_S \circ D_S$ is similarly defined, and, in general, we will use the notation $p \circ D$ to denote the distribution of $p(x)$ where $x \sim D$.  

In order to model the situation, we decompose the hypothesis $h = g\circ f \in \mathcal{H}_2 \circ \mathcal{H}_1$ as done in conventional domain adaptation. In output-side domain adaptation, we present an additional functions class $\mathcal{H}'_2$ and we learn a pseudo-inverse $\hat g$ of $g$ taken from $\mathcal{H}'_2$. This function helps in recovering the feature representation of a given output.

\paragraph{Assumptions} Our generalization bounds will rely on two assumptions. They are the universal Lipschitzness and the factor triangle inequality.

We begin with the definition of a Lipschitz functions class.
\begin{definition}[Lipschitz hypothesis class] Let $\ell:B \times B \rightarrow \mathbb{R}_+$ be a loss function over $B$. Let $\mathcal{C}$ be a class of functions $c:A \rightarrow B$. 
\begin{itemize}
\item A function $c\in \mathcal{C}$ is Lipschitz with respect to $\ell$, if there is a constant $L>0$ such that: $\forall a_1, a_2 \in A: \ell(c(a_1),c(a_2)) \leq L \cdot \ell(a_1,a_2) $.
\item $\mathcal{C}$ is a universal Lipschitz hypothesis class with respect to $\ell$, if all function $c\in\mathcal{C}$ are Lipschitz with some universal constant $L>0$.
\item $\mathcal{C}$ is a universal Bi-Lipschitz hypothesis class with respect to $\ell$, if every function $c\in\mathcal{C}$ is invertible and both $\mathcal{C}$ and $\mathcal{C}^{-1}=\{c^{-1}:c \in \mathcal{C}\}$ are universal Lipschitz hypothesis classes.
\end{itemize}
\end{definition}

\begin{assumption}[Universal Lipschitzness]\label{assmp:lipsch} We assume that $\mathcal{H}_2$ and $\mathcal{H}'_2$ are universal Lipschitz hypothesis classes with respect to the loss function $\ell: \mathcal{Y}\times\mathcal{Y}  \rightarrow \mathbb{R}_+$.
\end{assumption}

The assumption holds, for example, for $\ell$ that is a squared loss or the absolute loss, where $\mathcal{H}_2$ and $\mathcal{H}'_2$ consist of feedforward neural networks with the activation function 
\begin{small}
$\textnormal{PReLU}_\alpha(x)=
\left\{
	\begin{array}{ll}
		x  & \mbox{if } x \geq 0 \\
		\alpha x & \mbox{if } x < 0
	\end{array}
\right.$ 
\end{small} with parameter $\alpha \geq 0$ and the weight matrix of each layer has a norm $\in [a,b]$ such that $b > a > 0$. 

\begin{assumption}[Factor triangle inequality]\label{assmp:triangle} Let $\ell: \mathcal{Y}\times\mathcal{Y}  \rightarrow \mathbb{R}_+$ be the loss function. We assume that $\ell$ obeys a factor-triangle-inequality, i.e,
\begin{small}
\begin{equation}
\begin{aligned}
\exists K>0:\forall& y_1,y_2,y_3 \in \mathcal{Y}:\\
&\ell(y_1,y_3) \leq K \left[\ell(y_1,y_2)+\ell(y_2,y_3)\right] 
\end{aligned}
\end{equation}
\end{small}
\end{assumption}

The second assumption allows us to address common losses. the absolute loss, $\ell(a,b)= |a-b|$, satisfies the assumption with constant $K=1$ and the squared loss, $\ell(a,b)=|a-b|^2$, satisfies it with $K=3$. 

\subsection{Generalization bounds}

This section presents generalization bounds for output-side domain adaptation given in terms of the discrepancy distance. 

\begin{theorem}\label{thm:main1} If Assumptions~\ref{assmp:lipsch} and~\ref{assmp:triangle} hold, then for all $h=g\circ f \in \mathcal{H}$ and $\hat g\in \mathcal{H}'_2$,
\begin{small}
\begin{equation}\label{eq:main1}
\begin{aligned}
R_{D_T}&[h,y_T] 
\lesssim R_{D_S}[h, h^*_S] +R_{D_S}[h^*_S,h^*_T] +R_{D_T}\left[h^{*}_{T}, y_T \right]\\
&+R_{\hat g \circ D^y_T}\left[\hat g_T \circ g^{*}_{T}, \Id \right]+R_{f \circ D_T}\left[\hat g_T \circ g^{*}_{T}, \Id \right]\\
&+ R_{D^y_T}[g\circ \hat g, \Id] + \disc_{\mathcal{H}_2}(f\circ D_S,\hat g\circ D^y_T)
\end{aligned}
\end{equation}
\end{small}
Here, $h^*_T := g^{*}_T \circ f := \arg\min_{h \in \mathcal{H}_2 \circ f} R_{D_T}\left[h,y_T\right]$ and $h^*_S := g^{*}_S \circ f$ be the same for the source domain $(D_S,y_S)$. In addition, we denote 
$\hat g_T = \arg\min_{\bar g \in \mathcal{H}'_2} \left\{R_{\hat g \circ D^y_T}\left[\bar g \circ g^{*}_{T}, \Id \right]+R_{f \circ D_T}\left[\bar g \circ g^{*}_{T}, \Id \right]\right\}$.
\end{theorem}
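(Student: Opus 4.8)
The plan is to follow the decomposition of Theorem~\ref{thm:mansour}, but since the feature distribution $f\circ D_T$ is not available, I would replace it by the proxy $\hat g_T\circ D^y_T$, i.e.\ the observed output distribution pushed back through the idealized pseudo-inverse $\hat g_T$. Two elementary tools will be used repeatedly: (a) by Assumption~\ref{assmp:triangle}, $R_D[p,r]\le K R_D[p,q]+K R_D[q,r]$ for any maps $p,q,r$ on $x\sim D$; (b) by Assumption~\ref{assmp:lipsch}, $R_D[c\circ p, c\circ q]\le L\,R_D[p,q]$ for $c\in\mathcal H_2$ (resp.\ with $L'$ for $c\in\mathcal H'_2$). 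Combining (a) and (b) one obtains a ``change of base point'' estimate: for $c,c'\in\mathcal H_2$ and any auxiliary map $w$, $R_D[c\circ p, c'\circ p]\lesssim R_D[p,w]+R_D[c\circ w, c'\circ w]$, where $\lesssim$ hides products of $K,L,L'$ only. All of the bound's constants are of this kind, which is why the statement is phrased with $\lesssim$.

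First I would peel off the target optimum: by (a), $R_{D_T}[h,y_T]\le K R_{D_T}[h,h^*_T]+K R_{D_T}[h^*_T,y_T]$, the second summand being one of the right-hand-side terms, so it remains to control $A:=R_{D_T}[h,h^*_T]=R_{f\circ D_T}[g,g^*_T]$. The crucial move is to transport the base distribution $f\circ D_T$ to $\hat g_T\circ D^y_T$. Using $g^*_T\circ f=h^*_T$ together with $R_{D_T}[h^*_T,y_T]$ and $\hat g_T\circ g^*_T\approx\Id$ on $f\circ D_T$, facts (a)--(b) give $R_{D_T}[f,\hat g_T\circ y_T]\lesssim R_{D_T}[h^*_T,y_T]+R_{f\circ D_T}[\hat g_T\circ g^*_T,\Id]$; then the change-of-base-point estimate (with $p=f$, $w=\hat g_T\circ y_T$) yields
\[
A\;\lesssim\; R_{D_T}[h^*_T,y_T]\;+\;R_{f\circ D_T}[\hat g_T\circ g^*_T,\Id]\;+\;R_{\hat g_T\circ D^y_T}[g,g^*_T].
\]

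Next I would bridge $\hat g_T$ to $\hat g$ on the output side. Inserting $\hat g_T\circ g\circ\hat g$ and using $g\circ\hat g\approx\Id$ on $D^y_T$ and then $\hat g_T\circ g^*_T\approx\Id$ on $\hat g\circ D^y_T$, one gets $R_{D^y_T}[\hat g_T,\hat g]\lesssim R_{D^y_T}[g\circ\hat g,\Id]+R_{\hat g\circ D^y_T}[\hat g_T\circ g^*_T,\Id]+R_{\hat g\circ D^y_T}[g,g^*_T]$, the last summand being the cost of replacing $g^*_T$ by $g$ under the Lipschitz map $\hat g_T$. Feeding this into another change-of-base-point step (base $D^y_T$, $p=\hat g_T$, $w=\hat g$) bounds $R_{\hat g_T\circ D^y_T}[g,g^*_T]$ by $R_{\hat g\circ D^y_T}[g,g^*_T]$ plus the terms $R_{D^y_T}[g\circ\hat g,\Id]$ and $R_{\hat g\circ D^y_T}[\hat g_T\circ g^*_T,\Id]$. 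Finally, since $g,g^*_T\in\mathcal H_2$, Definition~\ref{def:discrepancy} gives $R_{\hat g\circ D^y_T}[g,g^*_T]\le R_{f\circ D_S}[g,g^*_T]+\disc_{\mathcal H_2}(f\circ D_S,\hat g\circ D^y_T)$, and $R_{f\circ D_S}[g,g^*_T]=R_{D_S}[h,h^*_T]\lesssim R_{D_S}[h,h^*_S]+R_{D_S}[h^*_S,h^*_T]$ by one more use of (a). Chaining the displays and absorbing constants reproduces~\eqref{eq:main1}. Note the argument never uses the minimality of $\hat g_T$: it holds verbatim for any $\bar g\in\mathcal H'_2$ in place of $\hat g_T$, and the stated form is the tightest such bound, attained at the minimizer.

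The part requiring the most care — and the main obstacle — is arranging the chain so no circularity appears: the quantity $R_{\hat g\circ D^y_T}[g,g^*_T]$ re-enters when bridging $\hat g_T$ to $\hat g$, and one must ensure it is only ever pushed ``downward'' into the discrepancy and source terms, never re-expanded back into $A$. Verifying that each of the $g$-versus-$g^*_T$ and $\hat g$-versus-$\hat g_T$ substitution errors is captured by exactly one of the seven listed terms (rather than by something uncontrolled) is the real content; once that bookkeeping is pinned down, every individual step is just an instance of (a), (b), the change-of-base-point estimate, or the definition of discrepancy.
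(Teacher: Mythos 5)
Your proposal is correct and follows essentially the same route as the paper's proof: both arguments combine the factor triangle inequality, the Lipschitz assumptions on $\mathcal{H}_2$ and $\mathcal{H}'_2$, and the definition of discrepancy to transport the comparison of $g$ and $g^*_T$ from the unobservable $f\circ D_T$ to $\hat g\circ D^y_T$ and then to $f\circ D_S$, yielding exactly the same seven error terms. The only difference is cosmetic --- you peel off $h^*_T$ first, whereas the paper first inserts the intermediary $g\circ\hat g_T\circ y_T$ --- and, as you note, neither argument actually uses the minimality of $\hat g_T$.
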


\begin{proof}
By the factor triangle inequality,
\begin{small}
\begin{equation}\label{eq:prf1eq1}
\begin{aligned}
R_{D_T}&[h,y_T] \\
\lesssim& R_{D_T}\left[g\circ \hat g_{T}\circ y_T, y_T\right] + R_{D_T}\left[h, g \circ \hat g_{T}\circ y_T\right] \\ 
=& R_{D^y_T}\left[g\circ \hat g_{T}, \Id\right] + R_{D_T}\left[h, g \circ \hat g_{T}\circ y_T\right] \\
\end{aligned}
\end{equation}
\end{small}
Since $\mathcal{H}_2$ is a universal Lipschitz hypothesis class and by the factor triangle inequality, 
\begin{small}
\begin{equation}
\begin{aligned}
R_{D^y_T}&\left[g\circ \hat g_{T}, \Id\right] \lesssim R_{D^y_T}\left[g\circ \hat g_{T}, g\circ \hat g\right] + R_{D^y_T}[g\circ \hat g, \Id]\\
\lesssim& R_{D^y_T}\left[\hat g_{T}, \hat g\right]+ R_{D^y_T}[g\circ \hat g, \Id]\\
\end{aligned}
\end{equation}
\end{small}
Since $\mathcal{H}'_2$ is a universal Lipschitz hypothesis class and by the factor triangle inequality, 
\begin{small}
\begin{equation}
\begin{aligned}
R_{D^y_T}&\left[g\circ \hat g_{T}, \Id\right] 
\lesssim  R_{\hat g \circ D^y_T}\left[\hat g_{T}\circ g, \Id\right]+R_{D^y_T}\left[\hat g_{T}\circ g \circ \hat g, \hat g_T\right]\\
&+ R_{D^y_T}[g\circ \hat g, \Id]\\
\lesssim & R_{\hat g \circ D^y_T}\left[\hat g_{T}\circ g, \Id\right] + R_{D^y_T}[g\circ \hat g, \Id]\\
\lesssim& R_{\hat g\circ D^y_T}\left[\hat g_{T} \circ g, \hat g_{T}\circ g^{*}_{T}\right]+R_{\hat g\circ D^y_T}\left[\hat g_{T}\circ g^{*}_{T},\Id \right]+ R_{D^y_T}[g\circ \hat g, \Id]\\
\lesssim& R_{\hat g\circ D^y_T}\left[g,  g^{*}_{T}\right]+R_{\hat g\circ D^y_T}\left[\hat g_{T}\circ g^{*}_{T},\Id \right]+ R_{D^y_T}[g\circ \hat g, \Id]\\
\end{aligned}
\end{equation}
\end{small}
By the definition of discrepancy,
\begin{small}
\begin{equation}\label{eq:prf1eq2}
\begin{aligned}
R_{\hat g\circ D^y_T}&\left[g,  g^*_{T}\right] \lesssim  R_{f\circ D_S}\left[ g, g^*_{T}\right] +\disc_{\mathcal{H}_2}(f\circ D_S, \hat g\circ D^y_T) \\
=& R_{D_S}\left[ h, h^{*}_{T}\right] +\disc_{\mathcal{H}_2}(f\circ D_S, \hat g\circ D^y_T)\\
\lesssim& R_{D_S}\left[ h, h^{*}_{S}\right]+
R_{D_S}\left[ h^*_S, h^{*}_{T}\right]\\
&+\disc_{\mathcal{H}_2}(f\circ D_S, \hat g\circ D^y_T)
\end{aligned}
\end{equation}
\end{small}
In addition, since $\mathcal{H}_2$ and $\mathcal{H}'_2$ are universal Lipschitz hypothesis classes and by the factor triangle inequality,
\begin{small}
\begin{equation}\label{eq:prf1eq3}
\begin{aligned}
R_{D_T}&\left[h, g \circ \hat g_{T}\circ y_T\right]\lesssim R_{D_T}\left[f, \hat g_{T}\circ y_T \right]\\
\lesssim& R_{D_T}\left[\hat g_{T} \circ h^{*}_{T}, \hat g_{T}\circ y_T \right]+ R_{D_T}\left[\hat g_{T} \circ h^{*}_{T}, f \right]\\
\lesssim& R_{D_T}\left[h^{*}_{T}, y_T \right]+ R_{f\circ D_T}\left[\hat g_{T} \circ g^{*}_{T}, \Id \right]\\
\end{aligned}
\end{equation}
\end{small}
Combining Eqs.~\ref{eq:prf1eq1},~\ref{eq:prf1eq2},~\ref{eq:prf1eq3} leads to the desired bound.
\end{proof}

The bound is illustrated in Fig.~\ref{fig:ODAfig}.  Comparing the bound in Thm.~\ref{thm:main1} to the bound in Thm.~\ref{thm:mansour}, we note that the two bounds seem to be very similar to each other. In both cases, the target generalization risk, $R_{D_T}[h,y_T]$, is upper bounded by the sum between the source estimation risk with respect to the best source hypothesis, $R_{D_S}[h,h^*_S]$, the discrepancy between the distributions over the feature space and an unmeasurable constant, $R_{D_T}\left[h^{*}_{T}, y_T \right]+R_{D_S}\left[h^{*}_{S}, h^*_T \right]$. There are two main differences between the two bounds. The first is that the discrepancies differ. In the bound in Thm.~\ref{thm:mansour}, the term is $\disc_{\mathcal{H}_2}(f \circ D_S, f \circ D_T)$ while in Thm.~\ref{thm:main1}, the term is the analogue in the output-side domain adaptation setting, $\disc_{\mathcal{H}_2}(f\circ D_S, \hat g\circ D^y_T)$. In addition, in the bound in Thm.~\ref{thm:main1}, there are three additional invertibility terms. The first two terms measure the invertibility of $g^{*}_T$, i.e, $R_{\hat g \circ D^y_T}\left[\hat g_T \circ g^{*}_{T}, \Id \right]+R_{f \circ D_T}\left[\hat g_T \circ g^{*}_{T}, \Id \right]$. The third term measures the invertibility of of $g$, i.e, $R_{D^y_T}[g\circ \hat g, \Id]$.

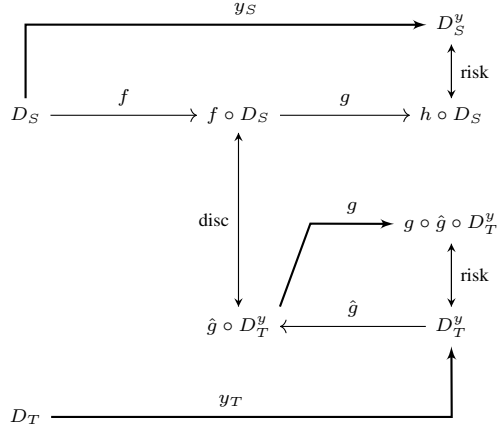
\begin{figure}[t]
\begin{center}
\begin{tikzpicture}[auto,scale=0.8]
    \node (rep1) {\begin{scriptsize}$f\circ D_S$\end{scriptsize} };
    
    \node (rep2) at ([shift={(0,-3.5)}] rep1) 
    {\begin{scriptsize}$\hat g\circ D^y_T$\end{scriptsize} };
    
    \node (out1) at ([shift={(3.5,0)}] rep1) 
    {\begin{scriptsize}$h \circ D_S$\end{scriptsize} };
    
    \node (out2) at ([shift={(3.5,-3.5)}] rep1) {\begin{scriptsize}$D^y_T$\end{scriptsize}};
    
    \node (in1) at ([shift={(-3.5,0)}] rep1)   
    {\begin{scriptsize}$D_S$\end{scriptsize} };
    
    \node (in2) at ([shift={(-3.5,-5)}] rep1) 
    {\begin{scriptsize}$D_T$\end{scriptsize}};
    
    \node (out3) at ([shift={(3.5,1.5)}] rep1) 
    {\begin{scriptsize}$D^y_S$\end{scriptsize}};
    
    \node (out4) at ([shift={(3.5,-1.8)}] rep1) 
    {\begin{scriptsize}$g \circ \hat g \circ D^y_T$\end{scriptsize}};
    
	\path[-stealth] (rep1) edge node [left] 
    {\begin{scriptsize}disc\end{scriptsize}} (rep2) ;
    \path[-stealth] (rep2) edge (rep1) ;
    \path[-stealth] (out3) edge node [right] 
    {\begin{scriptsize}risk\end{scriptsize}} (out1) ;
    \path[-stealth] (out1) edge (out3) ;
    
    \path[-stealth] (out4) edge node [right] 
    {\begin{scriptsize}risk\end{scriptsize}} (out2) ;
    \path[-stealth] (out2) edge (out4) ;
    
    \draw[->] (rep1)  -- (out1) node[midway] 
    {\begin{scriptsize}$g$\end{scriptsize}};
    \draw[->] (out2) -- (rep2) node[midway,above] 
    {\begin{scriptsize}$\hat g$\end{scriptsize}};

    \draw[->] (in1) -- (rep1) node[midway] 
    {\begin{scriptsize}$f$\end{scriptsize}};

    \path [l] (rep2.north east) -- ++(0.5,1.375)   -- node[pos=0.5] 
    {\begin{scriptsize}$g$\end{scriptsize}}
    (out4.west) ;

    \path [l] (in2.east) -- ++(6.585,0) node[pos=0.45] 
    {\begin{scriptsize}$y_T$\end{scriptsize}}  
    -- (out2.south) ;
    \path [l] (in1.north) -- ++(0,1.22)  -- node[pos=0.55] 
    {\begin{scriptsize}$y_S$\end{scriptsize}} 
    (out3.west) ;
    
\end{tikzpicture}
\end{center}
\caption{\label{fig:ODAfig} Output-side domain adaptation. Similarly to Fig.~\ref{fig:da}, the learned function is $h = g \circ f$. The vertical edges stand for the discrepancy between the the two distributions $f\circ D_S$ and $\hat g\circ D_T$, the risk between $h$ and $y_S$ on the source distribution, $D_S$ and the risk between $g \circ \hat g$ and $\Id$ on $D^y_T$.}
\end{figure}

\subsection{Analogy based adaptation}

The bound presented above contains the risk between the two hypotheses $h^*_S$ and $h^*_T$ for the distribution $D_S$. However, the image of $y_S$ and the image of $y_T$ might be completely disjoint, making this risk unmanageable. This is also true for conventional unsupervised domain adaptation.

Consider, for example, the programing languages example in the introduction. We can set $D_S=D_T$ to be a fixed distribution of program specifications. Since the Java and the C\# programs compile successfully on the respective compiler, the adaptation results in disjoint $y_S$ and $y_T$.

In order to model this situation, we decompose a hypothesis $h = g\circ f$ such that $g=a^{-1} \circ b$. The component $a$ serves as an adapter that maps target domain outputs to source domain outputs and is assumed to be invertible. In addition, we also learn an invertible function $b$, that maps $\mathcal{Y}$ to the feature space $\mathcal{F}$. Similarly, $\mathcal{H} = \mathcal{H}_2 \circ \mathcal{H}_1$ and $\mathcal{H}_2 = \mathcal{H}^{-1}_3 \circ \mathcal{H}_4$. Here, $\mathcal{H}_1$ is the hypothesis class (e.g., $h$), $\mathcal{H}_2$ is the set of classifiers (e.g., $g$),  $\mathcal{H}_3$ is a set of adapters (e.g., $a$) and $\mathcal{H}_4$ is a set of output-side to feature space mappings (e.g., $b$).  

For simplicity, we assume that $\mathcal H_2$ is a class of invertible functions. This can be relaxed, similar to what was done in Thm.~\ref{thm:main1}, at the cost of adding more risk terms.

\begin{theorem}\label{thm:main2} If Assumption~\ref{assmp:triangle} holds and $\mathcal{H}_2, \mathcal{H}_3$ are a Bi-Lipschitz hypothesis classes, then for all $h=g\circ f \in \mathcal{H}$ where $g = a^{-1}\circ b \in \mathcal{H}_2$,
\begin{small}
\begin{equation}
\begin{aligned}
R_{D_T}&[h,y_T] \lesssim R_{D_S}[a\circ h,y_S] + R_{D_S}[a\circ h^*_T,h^*_S]  \\
&+R_{D_S}[h^*_S,y_S] +R_{D_T}[h^*_T,y_T]+\disc_{\mathcal{H}^{-1}_4}(a\circ D^y_T, D^y_S) \\
\end{aligned}
\end{equation}
\end{small}
Here, $h^*_T := g^{*}_T \circ f$ where $g^{*}_T := a^{-1}\circ b^*_T$ such that 
\[
b^*_T :=  \arg\min_{b \in \mathcal{H}_4} R_{D_T}\left[a^{-1}\circ b \circ f ,y_T\right]
\]
In addition, $h^*_S$, $g^{*}_S$ and $b^{*}_S$ are the same for the source domain $(D_S,y_S)$.  
\end{theorem}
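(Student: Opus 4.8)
The plan is to follow the chain-of-inequalities strategy of the proof of Theorem~\ref{thm:main1}: repeatedly peel off the Lipschitz maps $a$ and $b$, insert the oracle hypotheses $h^*_T$ and $h^*_S$ (equivalently $b^*_T$ and $b^*_S$) by the factor triangle inequality, and invoke the discrepancy exactly once --- at the moment when two inverse maps from $\mathcal{H}^{-1}_4$ are compared on a common argument. Two preliminary facts are needed. First, since $g=a^{-1}\circ b\in\mathcal{H}_2$ and $a\in\mathcal{H}_3$ are invertible, $b=a\circ g$ is invertible, and $b=a\circ g$, $b^{-1}=g^{-1}\circ a^{-1}$ are compositions of uniformly Lipschitz maps because $\mathcal{H}_2,\mathcal{H}_3$ are Bi-Lipschitz; hence the relevant elements of $\mathcal{H}_4$ form (effectively) a universal Bi-Lipschitz class, and likewise for $b^*_T,b^*_S$. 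Second, from $h=a^{-1}\circ b\circ f$ and $h^*_T=a^{-1}\circ b^*_T\circ f$ we get the identities $a\circ h=b\circ f$ and $a\circ h^*_T=b^*_T\circ f$, and similarly on the source side.

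The core argument proceeds as follows. Using that $a^{-1}$ is Lipschitz, $R_{D_T}[h,y_T]\lesssim R_{D_T}[a\circ h,a\circ y_T]=R_{D_T}[b\circ f,a\circ y_T]$; using that $b$ is Lipschitz and $b\circ b^{-1}=\Id$, this is $\lesssim R_{D_T}[f,\,b^{-1}\circ a\circ y_T]$. Now insert $(b^*_T)^{-1}\circ a\circ y_T$ via the factor triangle inequality. On $R_{D_T}[f,\,(b^*_T)^{-1}\circ a\circ y_T]$, rewrite $f=(b^*_T)^{-1}\circ b^*_T\circ f=(b^*_T)^{-1}\circ a\circ h^*_T$ and peel off $(b^*_T)^{-1}$ then $a$, leaving $R_{D_T}[h^*_T,y_T]$. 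The term $R_{D_T}[(b^*_T)^{-1}\circ a\circ y_T,\,b^{-1}\circ a\circ y_T]$ equals $R_{a\circ D^y_T}[(b^*_T)^{-1},b^{-1}]$, since both maps act on $a(y_T(x))\sim a\circ D^y_T$; as $(b^*_T)^{-1},b^{-1}\in\mathcal{H}^{-1}_4$, Definition~\ref{def:discrepancy} bounds it by $R_{D^y_S}[(b^*_T)^{-1},b^{-1}]+\disc_{\mathcal{H}^{-1}_4}(a\circ D^y_T,D^y_S)$. Finally $R_{D^y_S}[(b^*_T)^{-1},b^{-1}]=R_{D_S}[(b^*_T)^{-1}\circ y_S,\,b^{-1}\circ y_S]$; split it with $f$ as the midpoint: the branch to $b^{-1}\circ y_S$ collapses (peel $b^{-1}$, use $b\circ f=a\circ h$) to $R_{D_S}[a\circ h,y_S]$, and the branch to $(b^*_T)^{-1}\circ y_S$, after writing $f=(b^*_T)^{-1}\circ b^*_T\circ f$, peeling $(b^*_T)^{-1}$, using $b^*_T\circ f=a\circ h^*_T$, and inserting $h^*_S$, collapses to $R_{D_S}[a\circ h^*_T,h^*_S]+R_{D_S}[h^*_S,y_S]$. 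Collecting the residual terms gives exactly the claimed inequality.

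I expect the main obstacle to be choosing the two intermediate functions --- $(b^*_T)^{-1}\circ a\circ y_T$ in the target split and $f$ in the source split. These are precisely the choices that (i) funnel the only uncontrolled difference into a pair of $\mathcal{H}^{-1}_4$-maps evaluated at the common point $a(y_T(x))$, which is what makes the single application of $\disc_{\mathcal{H}^{-1}_4}$ legal, and (ii) cause every other residual to collapse, after Lipschitz peeling, onto one of the approximation/estimation terms in the statement. A secondary technical point is bookkeeping the direction of each peel: an upper bound of the shape $R_D[\phi\circ p,\phi\circ q]\lesssim R_D[p,q]$ forces one to first express each side as $\phi$ of something, which is where invertibility of $b$ and $b^*_T$ enters. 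Symmetry of $\ell$ (which holds for the absolute and squared losses) is used implicitly to match argument orderings in the final terms; without it the same bound holds up to reordering inside each risk.
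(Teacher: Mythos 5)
Your proposal is correct and follows essentially the same route as the paper's proof: both insert the oracle intermediate function (your $(b^*_T)^{-1}\circ a\circ y_T$ is the paper's $g\circ(g^*_T)^{-1}\circ y_T$ after peeling $g$), funnel the uncontrolled difference into $R_{a\circ D^y_T}\left[(b^*_T)^{-1},b^{-1}\right]$, apply $\disc_{\mathcal{H}^{-1}_4}(a\circ D^y_T,D^y_S)$ exactly once, and collapse the source-side residual through the triangle inequality and Lipschitz peeling onto $R_{D_S}[a\circ h,y_S]$, $R_{D_S}[a\circ h^*_T,h^*_S]$ and $R_{D_S}[h^*_S,y_S]$. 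The only difference is cosmetic --- you work at the feature-space level throughout while the paper conjugates by $g$ and descends to the $b$-level only for the discrepancy step.
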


\begin{proof} 
By the factor triangle inequality,
\begin{small}
\begin{equation}\label{eq:prf1seq1}
\begin{aligned}
R_{D_T}&[h,y_T] \\
\lesssim& R_{D_T}\left[g\circ (g^{*}_{T})^{-1}\circ y_T, y_T\right] + R_{D_T}\left[h, g \circ  (g^{*}_{T})^{-1}\circ y_T\right] \\ 
=& R_{D^y_T}\left[g\circ (g^{*}_{T})^{-1}, \Id\right] + R_{D_T}\left[h, g \circ (g^{*}_{T})^{-1}\circ y_T\right] \\
\end{aligned}
\end{equation}
\end{small}
Since $\mathcal{H}_2$ is a universal Lipschitz hypothesis class,
\begin{small}
\begin{equation}\label{eq:prf1seq2}
\begin{aligned}
R_{D^y_T}\left[g\circ (g^{*}_{T})^{-1}, \Id\right] =& R_{D^y_T}\left[g\circ (g^{*}_{T})^{-1}, g\circ g^{-1}\right] \\
\lesssim& R_{D^y_T}\left[(g^{*}_{T})^{-1}, g^{-1}\right] \\
\end{aligned}
\end{equation}
\end{small}
By the definition of discrepancy and by the factor triangle inequality,
\begin{small}
\begin{equation}\label{eq:prf1seq3}
\begin{aligned}
R_{D^y_T}&\left[(g^{*}_{T})^{-1}, g^{-1}\right] = R_{a\circ D^{y}_T}\left[(b^{*}_{T})^{-1}, b^{-1}\right] \\
\lesssim& R_{D^y_S}\left[(b^{*}_{T})^{-1}, b^{-1}\right] + \disc_{\mathcal{H}^{-1}_4}(a\circ D^y_T, D^y_S)\\
=& R_{a^{-1}\circ D^y_S}\left[(b^{*}_{T})^{-1}\circ a, b^{-1}\circ a\right] \\
&+ \disc_{\mathcal{H}^{-1}_4}(a\circ D^y_T, D^y_S)\\
=& R_{a^{-1}\circ D^y_S}\left[(g^{*}_{T})^{-1}, g^{-1}\right] + \disc_{\mathcal{H}^{-1}_4}(a\circ D^y_T, D^y_S)\\
\end{aligned}
\end{equation}
\end{small}
Since $\mathcal{H}_2$ is a Bi-Lipschitz hypothesis class,
\begin{small}
\begin{equation}\label{eq:prf1seq4}
\begin{aligned}
R_{a^{-1}\circ D^y_S}&\left[(g^{*}_{T})^{-1}, g^{-1}\right]
\lesssim R_{a^{-1}\circ D^y_S}\left[g \circ (g^{*}_{T})^{-1}, g \circ g^{-1}\right]  \\
=& R_{a^{-1}\circ D^y_S}\left[g \circ (g^{*}_{T})^{-1}, \Id\right]  \\
=& R_{D_S}\left[g \circ (g^{*}_{T})^{-1} \circ a^{-1} \circ y_S, a^{-1}\circ y_S\right]  \\
\end{aligned}
\end{equation}
\end{small}
By the factor triangle inequality,
\begin{small}
\begin{equation}\label{eq:prf1seq5}
\begin{aligned}
R_{a^{-1}\circ D^y_S}&\left[(g^{*}_{T})^{-1}, g^{-1}\right] \\
\lesssim& R_{D_S}\left[g \circ (g^{*}_{T})^{-1} \circ a^{-1} \circ y_S, h\right] +R_{D_S}\left[h, a^{-1}\circ y_S\right]
\end{aligned}
\end{equation}
\end{small}
Since $\mathcal{H}_3$ and $\mathcal{H}_2$ are Bi-Lipschitz hypothesis classes, 
\begin{small}
\begin{equation}\label{eq:prf1seq6}
\begin{aligned}
R_{a^{-1}\circ D^y_S}\left[(g^{*}_{T})^{-1}, g^{-1}\right] \lesssim R_{D_S}\left[y_S, a \circ h^*_T\right] +R_{D_S}\left[a\circ h, y_S\right] \\
\end{aligned}
\end{equation}
\end{small}
By the factor triangle inequality,
\begin{small}
\begin{equation}\label{eq:prf1seq7}
\begin{aligned}
R_{a^{-1}\circ D^y_S}&\left[(g^{*}_{T})^{-1}, g^{-1}\right] \lesssim R_{D_S}\left[h^*_S, y_S\right] \\
&+R_{D_S}\left[a \circ h^*_T,h^*_S\right] +R_{D_S}\left[a\circ h, y_S\right] \\
\end{aligned}
\end{equation}
\end{small}
In addition, since $\mathcal{H}_2$ is a universal Bi-Lipschitz hypothesis class,
\begin{small}
\begin{equation}\label{eq:prf1seq8}
\begin{aligned}
R_{D_T}\left[h, g \circ (g^{*}_{T})^{-1}\circ y_T\right]
\lesssim& R_{D_T}\left[h^*_{T}, y_T \right]\\
\end{aligned}
\end{equation}
\end{small}
Combining Eqs.~\ref{eq:prf1seq1},~\ref{eq:prf1seq3},~\ref{eq:prf1seq7},~\ref{eq:prf1seq8} leads to the bound.
\end{proof}

Thm.~\ref{thm:main2}, which is illustrated in Fig.~\ref{fig:ODAfig2}, upper bounds the target generalization risk, $R_{D_T}[h,y_T]$. This bound is the sum between the source generalization risk between the adapted hypothesis $a\circ h$ and $y_S$, $R_{D_S}[a\circ h,y_S]$, the discrepancy between the adapted target output distribution $a \circ D^y_T$ and the source output distribution $D^y_S$, $\disc_{\mathcal{H}^{-1}_4}(a \circ D^y_T, D^y_S)$ and an unmeasurable constant, $R_{D_T}\left[h^{*}_{T}, y_T \right]+R_{D_S}\left[a\circ h^*_T,h^*_S \right]$. 

Note that $f$ can be removed, i.e., $\mathcal H_1 = \{\Id\}$. The advantage of using a non-trivial $f$ is that is that it allows the first component of the hypothesis from source to target to be non-invertible.

\begin{figure}[t]
\begin{center}
\begin{tikzpicture}[auto,scale=0.8]

    \node (in1) at ([shift={(-2.5,0)}] rep1)   
    {\begin{scriptsize}$D_S$\end{scriptsize}};
    \node (in2) at ([shift={(-2.5,-4.5)}] rep1)
    {\begin{scriptsize}$D_T$\end{scriptsize}};
    
    \node (rep1) 
    {\begin{scriptsize}$f\circ D_S$\end{scriptsize}};
    
    \node (mid1) at ([shift={(3,0)}] rep1) 
    {\begin{scriptsize}$b \circ f\circ D_S$\end{scriptsize}};
    
    \node (mid2) at ([shift={(6,0)}] rep1) 
    {\begin{scriptsize}$a \circ h\circ D_S$\end{scriptsize}};

    \node (mid3) at ([shift={(3,-3.5)}] rep1) 
    {\begin{scriptsize}$a \circ D^y_T$\end{scriptsize}};
    
    \node (out1) at ([shift={(3,-1.85)}] rep1)
    {\begin{scriptsize}$D^y_S$\end{scriptsize}};

    \node (out3) at ([shift={(6,-4.5)}] rep1) 
    {\begin{scriptsize}$D^y_T$\end{scriptsize}};
    
    \draw[->] (in1) -- (rep1) node[midway] 
    {\begin{scriptsize}$f$\end{scriptsize}};
    
    \draw[->] (rep1) -- (mid1) node[midway] 
    {\begin{scriptsize}$b$\end{scriptsize}};
    
    \path[-stealth] (out1) edge node [right] 
    {\begin{scriptsize}risk\end{scriptsize}} (mid1) ;
    \path[-stealth] (mid1) edge (out1) ;
    
    \path[-stealth] (mid1) edge node [pos=0.5,right,above] 
    {\begin{scriptsize}$=$\end{scriptsize}} (mid2) ;
    \path[-stealth] (mid2) edge (mid1) ;
    
    \path[-stealth] (out1) edge node [right] 
    {\begin{scriptsize}disc\end{scriptsize}} (mid3) ;
    \path[-stealth] (mid3) edge (out1) ;
   
    \draw[->] (in2) -- (out3) node[midway] 
    {\begin{scriptsize}$y_T$\end{scriptsize}};
    
    \path [l] (out3.north) -- ++(0,0.67)  -- node[pos=0.55,above] 
    {\begin{scriptsize}$a$\end{scriptsize}} (mid3.east) ;
    
    \path [l] (in1.south) -- ++(0,-1.5725)  -- node[pos=0.55] 
    {\begin{scriptsize}$y_S$\end{scriptsize}} (out1.west) ;

\end{tikzpicture}
\end{center}
\caption{\label{fig:ODAfig2} Analogy based output-side domain adaptation, where the learned functions are $h = g \circ f$ and $a$ where $g = a^{-1} \circ b$. The vertical edges stand for the discrepancy between the the two distributions $D^y_S$ and $a\circ D^y_T$ and the risk between $a\circ h$ and $y_S$ on the source distribution, $D_S$. In addition, $a \circ h = b \circ f$.}
\end{figure}
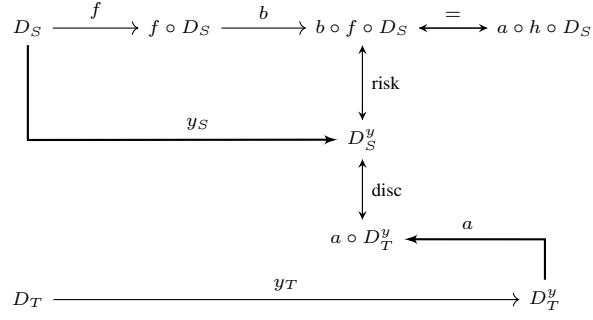

\section{Two-sided domain adaptation}

This setting is a special case of both unsupervised domain adaptation and unsupervised output-side domain adaptation. In this case, there are two domains, $(D_1,y_1)$ and $(D_2,y_2)$ and the learning algorithm is provided with four datasets: one includes i.i.d input instances from the first domain, the second includes labels of i.i.d instances from the first domain and the other two are the same for the second domain. The unlabeled input samples and output samples from each domain are not paired in any sense. Formally, we have two distributions, $D_1$ and $D_2$, and two target functions, $y_1$, $y_2$. The algorithm has access to the following four datasets, as illustrated in Fig.~\ref{fig:illustration}(c):
\begin{small}
\begin{equation}
\begin{aligned}
\{x_i\}^{m_1}_{i=1},\{y_1(x_j)\}^{m_2}_{j=1}  &\text{ such that } x_i,x_j \stackrel{\textnormal{i.i.d}}{\sim} D_1 \\
\{x_k\}^{n_1}_{k=1},\{y_2(x_l)\}^{n_2}_{l=1}  &\text{ such that } x_k,x_l \stackrel{\textnormal{i.i.d}}{\sim} D_2 
\end{aligned}
\end{equation}
\end{small}

We define a new type of discrepancy that measures if the relationships between two pairs of distributions $(D_{1,1},D_{1,2})$ and $(D_{2,1},D_{2,2})$ are similar. 
\begin{definition}[Quad discrepancy]\label{def:disc}
Let $\mathcal{C}$ be a set of functions from $A$ to $B$ and let $\ell: B\times B  \rightarrow \mathbb{R}_+$ be a loss function over $B$. The relationships discrepancy distance $\rdisc_{\mathcal{C}}$ between two pairs of distributions over $A$, $(D_{1,1},D_{1,2})$ and $(D_{2,1},D_{2,2})$,  is defined as follows,
\begin{small}
\begin{equation}
\begin{aligned}
&\rdisc_{\mathcal{C}}
  \begin{bmatrix}
    D_{1,1} & D_{1,2} \\
    D_{2,1} & D_{2,2}
  \end{bmatrix}
\\
&:= \sup_{c_1,c_2 \in \mathcal{C}}\Big\vert U_{D_{1,1},D_{1,2}}[c_1,c_2]-U_{D_{2,1},D_{2,2}}[c_1,c_2]\Big\vert
\end{aligned}
\end{equation}
\end{small}
Where, $U_{D_1,D_2}[c_1,c_2] := R_{D_1}[c_1,c_2] - R_{D_2}[c_1,c_2].$
\end{definition}
This new type of discrepancy measures the similarity between the relationships in two pairs of distributions. This quantity is at most, the sum of the discrepancies of each pair separately. Nevertheless, it might be a lot smaller. For example, we can take an arbitrary pair of distributions $(D_{1,1},D_{1,2}) = (D_{2,1},D_{2,2}) := (D_1,D_2)$ such that $\disc_{\mathcal{C}}(D_1,D_2)$ is (relatively) large and obtain 
\begin{small}
$$\rdisc_{\mathcal{C}}
  \begin{bmatrix}
    D_{1,1} & D_{1,2} \\
    D_{2,1} & D_{2,2}
  \end{bmatrix} = 0$$ \end{small} 
  while the sum of the discrepancies is (relatively) large. This follows since the relationships in the pair $(D_{1,1},D_{1,2})$ are the same relationships in the pair $(D_{2,1},D_{2,2})$. But, on the other hand, $D_{1}$ and $D_{2}$ not very much similar to each other. In addition, we consider that for any distribution $D$, 
\begin{small}
\begin{equation}
  \rdisc_{\mathcal{C}}
  \begin{bmatrix}
    D_{1} & D_{2} \\
    D & D
  \end{bmatrix} 
  = \rdisc_{\mathcal{C}}
  \begin{bmatrix}
    D_1 & D \\
    D_2 & D
  \end{bmatrix} = \disc_{\mathcal{C}}(D_1,D_2)
\end{equation}
\end{small}
We use this new type of discrepancy in order to bound the distance between discrepancies.

\begin{lemma}\label{lem:r-disc}
Let $(D_{1,1},D_{1,2})$ and $(D_{2,1},D_{2,2})$ be two pairs of distributions. Let $\mathcal{C}$ be any functions class. Then,
\begin{small}
\begin{equation}
\begin{aligned}
\Big\vert\disc_{\mathcal{C}}&(D_{1,1},D_{1,2}) - \disc_{\mathcal{C}}(D_{2,1},D_{2,2}) \Big\vert\\
&\leq \rdisc_{\mathcal{C}}
  \begin{bmatrix}
    D_{1,1} & D_{1,2} \\
    D_{2,1} & D_{2,2}
  \end{bmatrix}
\end{aligned}
\end{equation}
\end{small}
\end{lemma}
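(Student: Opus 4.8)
The plan is to reduce the statement to the elementary fact that for any two real-valued functions $\phi,\psi$ on a common domain, $\bigl|\sup_x|\phi(x)| - \sup_x|\psi(x)|\bigr| \le \sup_x|\phi(x)-\psi(x)|$, and then apply it with the right choice of $\phi$ and $\psi$. First I would rewrite both discrepancies in the $U$-notation: by Definition~\ref{def:discrepancy} and the definition of $U$, we have $\disc_{\mathcal{C}}(D_{1,1},D_{1,2}) = \sup_{c_1,c_2\in\mathcal{C}}\bigl|U_{D_{1,1},D_{1,2}}[c_1,c_2]\bigr|$ and likewise $\disc_{\mathcal{C}}(D_{2,1},D_{2,2}) = \sup_{c_1,c_2\in\mathcal{C}}\bigl|U_{D_{2,1},D_{2,2}}[c_1,c_2]\bigr|$, while $\rdisc_{\mathcal{C}}$ of the array is exactly $\sup_{c_1,c_2\in\mathcal{C}}\bigl|U_{D_{1,1},D_{1,2}}[c_1,c_2] - U_{D_{2,1},D_{2,2}}[c_1,c_2]\bigr|$.

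Next I would carry out the elementary inequality in this setting. Fix any $c_1,c_2\in\mathcal{C}$. By the triangle inequality on $\mathbb{R}$,
\begin{small}
\begin{equation}
\bigl|U_{D_{1,1},D_{1,2}}[c_1,c_2]\bigr| \le \bigl|U_{D_{2,1},D_{2,2}}[c_1,c_2]\bigr| + \bigl|U_{D_{1,1},D_{1,2}}[c_1,c_2] - U_{D_{2,1},D_{2,2}}[c_1,c_2]\bigr|.
\end{equation}
\end{small}
Bounding the first term on the right by $\disc_{\mathcal{C}}(D_{2,1},D_{2,2})$ and the second by $\rdisc_{\mathcal{C}}$ of the array, then taking the supremum over $c_1,c_2\in\mathcal{C}$ on the left, gives $\disc_{\mathcal{C}}(D_{1,1},D_{1,2}) - \disc_{\mathcal{C}}(D_{2,1},D_{2,2}) \le \rdisc_{\mathcal{C}}$ of the array. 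Swapping the roles of the two pairs yields the reverse inequality, and combining the two bounds gives the absolute-value statement.

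There is essentially no main obstacle here; the only point requiring a word of care is that the sup-difference inequality holds termwise (for each fixed $(c_1,c_2)$) \emph{before} taking suprema, so that the two suprema are taken over the same index set $\mathcal{C}\times\mathcal{C}$ — which is automatic since all three quantities range over the same pairs $(c_1,c_2)$. If one wishes to be careful about the suprema possibly being infinite, one can note the bound is trivial when $\rdisc_{\mathcal{C}}$ of the array is $+\infty$, and otherwise all quantities involved are finite and the manipulation above is valid.
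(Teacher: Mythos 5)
Your proof is correct and follows essentially the same route as the paper's: both arguments fix $(c_1,c_2)$, use the triangle inequality on $\mathbb{R}$ to bound $\bigl|U_{D_{1,1},D_{1,2}}[c_1,c_2]\bigr|$ by $\bigl|U_{D_{2,1},D_{2,2}}[c_1,c_2]\bigr|$ plus the difference term, take suprema, and conclude by symmetry of the two pairs. Your added remark about the termwise inequality preceding the suprema (and the infinite case) is a minor point of care the paper leaves implicit, but it does not change the argument.
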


\begin{proof}
Let $c_1,c_2 \in \mathcal{C}$, we denote:
\begin{small}
\begin{equation}
\begin{aligned}
U_{1}[c_1,c_2] := \Big\vert U_{D_{1,1},D_{1,2}}[c_1,c_2]\Big\vert
\end{aligned}
\end{equation}
\end{small}
In addition, we denote, $U_{2}[c_1,c_2]$ analogously for the second pair. By the reversed triangle inequality,
\begin{small}
\begin{equation}\label{eq:lem1eq1}
\begin{aligned}
&\Big\vert U_{1}[c_1,c_2] - U_{2}[c_1,c_2]\Big\vert \leq \\
&\Big\vert U_{D_{1,1},D_{1,2}}[c_1,c_2] - U_{D_{2,1},D_{2,2}}[c_1,c_2]\Big\vert
\end{aligned}
\end{equation}
\end{small}
Therefore, 
\begin{small}
\begin{equation}\label{eq:lem1eq2}
\begin{aligned}
&U_{1}[c_1,c_2] \leq  U_{2}[c_1,c_2] \\
&+ \Big\vert U_{D_{1,1},D_{1,2}}[c_1,c_2] - U_{D_{2,1},D_{2,2}}[c_1,c_2]\Big\vert
\end{aligned}
\end{equation}
\end{small}
We consider that, for all $i=1,2$:
\begin{small}
\begin{equation}\label{eq:lem1eq3}
\begin{aligned}
\sup_{c_1,c_2\in \mathcal{C}}U_{i}[c_1,c_2] = \disc_{\mathcal{C}}(D_{i,1},D_{i,2}) 
\end{aligned}
\end{equation}
\end{small}
\begin{small}
\begin{equation}\label{eq:lem1eq4}
\begin{aligned}
&\sup_{c_1,c_2\in\mathcal{C}}\Big\vert U_{D_{1,1},D_{1,2}}[c_1,c_2] - U_{D_{2,1},D_{2,2}}[c_1,c_2]\Big\vert \\ &=\rdisc_{\mathcal{C}}
  \begin{bmatrix}
    D_{1,1} & D_{1,2} \\
    D_{2,1} & D_{2,2}
  \end{bmatrix}
\end{aligned}
\end{equation}
\end{small}
Finally, by taking $\sup_{c_1,c_2\in\mathcal{C}}$ in both sides of Eq.~\ref{eq:lem1eq2}, combined with Eqs.~\ref{eq:lem1eq3},~\ref{eq:lem1eq4} we obtain that:
\begin{small}
\begin{equation}\label{eq:lem1eq6}
\begin{aligned}
\disc_{\mathcal{C}}(D_{1,1},D_{1,2}) - \disc_{\mathcal{C}}(D_{2,1},D_{2,2}) \\
\leq \rdisc_{\mathcal{C}}
  \begin{bmatrix}
    D_{1,1} & D_{1,2} \\
    D_{2,1} & D_{2,2}
  \end{bmatrix}
\end{aligned}
\end{equation}
\end{small}
Eq.~\ref{eq:lem1eq6} is symmetric with respect to the two pairs of distributions and the desired bound is obtained.
\end{proof}
\begin{corollary} \label{cogan} For all $h_1,h_2 \in \mathcal{H}$ and $a_1,a_2:\mathcal Y \rightarrow \mathbb{R}^
k$,
\begin{small}
\begin{equation}
\label{eq:cogan}
\begin{aligned}
\Big\vert \disc_{\mathcal{C}}&(a_1 \circ h_1 \circ D_{1},a_1 \circ D^y_{1})-\disc_{\mathcal{C}}(a_2 \circ h_2 \circ D_{2},a_2 \circ D^y_{2}) \Big\vert\\
&\leq \rdisc_{\mathcal{C}}
  \begin{bmatrix}
    a_1 \circ h_1 \circ D_{1} & a_1 \circ D^y_{1} \\
    a_2 \circ h_2 \circ D_{2} & a_2 \circ D^y_{2}
  \end{bmatrix}
\end{aligned}
\end{equation}
\end{small}
\end{corollary}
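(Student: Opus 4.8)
The plan is to recognize Corollary~\ref{cogan} as a direct instantiation of Lemma~\ref{lem:r-disc}; no new argument is required. Concretely, I would fix $h_1, h_2 \in \mathcal{H}$ and $a_1, a_2 : \mathcal{Y} \rightarrow \mathbb{R}^k$, and then set $D_{1,1} := a_1 \circ h_1 \circ D_1$, $D_{1,2} := a_1 \circ D^y_1$, $D_{2,1} := a_2 \circ h_2 \circ D_2$, and $D_{2,2} := a_2 \circ D^y_2$, using the pushforward notation $p \circ D$ introduced in Section~3. Each of these is a well-defined probability distribution over $\mathbb{R}^k$, so $(D_{1,1}, D_{1,2})$ and $(D_{2,1}, D_{2,2})$ are two pairs of distributions over the common space $\mathbb{R}^k$ --- exactly the data that Lemma~\ref{lem:r-disc} takes as input.

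Next I would apply Lemma~\ref{lem:r-disc} to these two pairs, with $\mathcal{C}$ the given class of functions on $\mathbb{R}^k$ and $\ell$ its associated loss. Since that lemma is stated for an arbitrary functions class, nothing further needs to be verified, and substituting the four distributions above into its conclusion produces precisely
\begin{small}
\begin{equation*}
\begin{aligned}
\Big\vert \disc_{\mathcal{C}}&(a_1 \circ h_1 \circ D_1, a_1 \circ D^y_1) - \disc_{\mathcal{C}}(a_2 \circ h_2 \circ D_2, a_2 \circ D^y_2) \Big\vert \\
&\leq \rdisc_{\mathcal{C}}
  \begin{bmatrix}
    a_1 \circ h_1 \circ D_1 & a_1 \circ D^y_1 \\
    a_2 \circ h_2 \circ D_2 & a_2 \circ D^y_2
  \end{bmatrix},
\end{aligned}
\end{equation*}
\end{small}
which is the claimed inequality.

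There is essentially no obstacle here: the proof is a pure specialization and uses none of the internal structure of the hypotheses $h_i = g_i \circ f_i$ or of the adapters $a_i$. The only points deserving a sentence of care are that the four pushforwards are legitimate distributions (immediate, as images of distributions under measurable maps) and that $\mathcal{C}$ and $\ell$ are compatible with the common domain $\mathbb{R}^k$ of all four of them --- both inherited verbatim from the hypotheses of Lemma~\ref{lem:r-disc}. The corollary is singled out not for mathematical depth but for its interpretive role: it bounds the gap between the two \emph{per-domain} discrepancies $\disc_{\mathcal{C}}(a_i \circ h_i \circ D_i, a_i \circ D^y_i)$, which is exactly the quantity that a CoGAN-style objective couples across the two domains.
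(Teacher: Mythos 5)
Your proposal is correct and matches the paper's proof exactly: the paper likewise obtains Corollary~\ref{cogan} as a special case of Lemma~\ref{lem:r-disc} with $D_{i,1} = a_i \circ h_i \circ D_i$ and $D_{i,2} = a_i \circ D^y_i$ for $i=1,2$. Your additional remarks on well-definedness of the pushforwards and the interpretive role of the bound are sound but not needed.
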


\proof{A special case of Lem.~\ref{lem:r-disc} with $D_{i,1} = a_i \circ h_i \circ D_{i}$, $D_{i,2} = a_i \circ D^y_{i}$ for $i=1,2$.}

Corollary~\ref{cogan} (illustrated in Fig.~\ref{fig:coGANfig}) motivates the training of two GANs together, as is done in CoGAN~\cite{cogan}. Specifically, we have already pointed out in Sec.~\ref{sec:pregan} that taking the supremum over two functions $c_1$ and $c_2$ is analogous to finding the best discriminator. The right hand side of Eq.~\ref{eq:cogan} can, therefore, be interpreted as finding the best discriminator $d$ that separates the learned function from the target function of the source domain much better than the analog functions of the target domain, or vice versa. If both domains are equally inseparable, e.g., by making sure that $d$ fails to discriminate in both, then the r.h.s is small. 

In CoGAN, $D_1=D_2$ is a distribution over random vectors, and two generative functions $h_1$ and $h_2$ are learned to create fake samples from two output distributions $D_1^y$ and $D_2^y$ ($y_1$ differs from $y_2$, and so the two output domains differ). The two learned functions share common layers, which correspond to $h_1 = g_1 \circ f$ and similarly for $h_2$, with a shared $f$. The discriminators between the the real and the fake samples in both domains $d_1 = d \circ a_1$ and $d_2 = d \circ a_2$, respectively, share most of their layers ($d$). This, as mentioned above, is analog to applying the same pair of functions $c_1$ and $c_2$ to the two domains in the quad discrepancy. 

\begin{small}
\begin{figure}[t]
\begin{center}
\begin{tikzpicture}[auto,scale=1]

    \node (in1) {\begin{scriptsize}$D_1$\end{scriptsize}};
    \node (in2) at ([shift={(0,-1.5)}] in1) 
    {\begin{scriptsize}$D_2$\end{scriptsize}};
    
    \node (out1) at ([shift={(2,0)}] in1) {\begin{scriptsize}$h_1\circ D_1$\end{scriptsize}};
    \node (out2) at ([shift={(2,-1.5)}] in1) {\begin{scriptsize}$h_2\circ D_2$\end{scriptsize}};

    \node (mid1) at ([shift={(4.5,0)}] in1) {\begin{scriptsize}$a_1\circ h_1\circ D_1$\end{scriptsize}};
    \node (mid2) at ([shift={(4.5,-1.5)}] in1) {\begin{scriptsize}$a_2\circ h_2\circ D_2$\end{scriptsize}};

    \node (ext1) at ([shift={(4.5,1)}] in1) {\begin{scriptsize}$D^y_1$\end{scriptsize}};
    \node (ext2) at ([shift={(4.5,-2.5)}] in1) {\begin{scriptsize}$D^y_2$\end{scriptsize}};

      \node (extt1) at ([shift={(7,0)}] in1) {\begin{scriptsize}$a_1 \circ D^y_1$\end{scriptsize}};
    \node (extt2) at ([shift={(7,-1.5)}] in1) {\begin{scriptsize}$a_2\circ D^y_2$\end{scriptsize}};

    \draw[->] (in1)  -- (out1) node[midway] {\begin{scriptsize}$h_1$\end{scriptsize}};
    \draw[->] (in2) -- (out2) node[midway] {\begin{scriptsize}$h_2$\end{scriptsize}};

    \draw[->] (out1) -- (mid1) node[midway] {\begin{scriptsize}$a_1$\end{scriptsize}};
    \draw[->] (out2) -- (mid2) node[midway] {\begin{scriptsize}$a_2$\end{scriptsize}};

	\path [l] (in1.north) -- ++(0,0.77)  -- node[pos=0.55] 
    {\begin{scriptsize}$y_1$\end{scriptsize}} 
    (ext1.west) ;    
    
    \path [l] (in2.south) -- ++(0,-0.77)  -- node[pos=0.55] 
    {\begin{scriptsize}$y_2$\end{scriptsize}} 
    (ext2.west) ;

    \path [l] (ext1.east) --node[pos=0.55] 
    {\begin{scriptsize}$a_1$\end{scriptsize}} ++(2.175,0)  --  
    (extt1.north) ;
    
    \path [l] (ext2.east) --node[pos=0.55] 
    {\begin{scriptsize}$a_2$\end{scriptsize}}  ++(2.175,0)  -- 
    (extt2.south) ;
    
    \path[-stealth] (mid1) edge node [pos=0.5,right,above] 
    {\begin{scriptsize}\end{scriptsize}} (extt2) ;
    \path[-stealth] (extt2) edge (mid1) ;
    
    \path[-stealth] (mid2) edge node [pos=0.8,right,below] 
    {\begin{scriptsize}$\;\;\;\;\;\;\;\rdisc$\end{scriptsize}} (extt1) ;
    \path[-stealth] (extt1) edge (mid2) ;

\end{tikzpicture}
\end{center}
\caption{\label{fig:coGANfig} Two-sided domain adaptation. The learned functions are $h_1$ and $h_2$, and the two auxiliary functions $a_1$ and $a_2$. The horizontal edges denote for the functions between the distributions. $a_1$ and $a_2$ stand for the adapters for each pair. The crossing two-sided arrows stand for the quad discrepancy between the pairs of distributions $(a_1\circ h_1 \circ D_1,a_1\circ D^y_1)$ and $(a_2\circ h_2 \circ D_2,a_2\circ D^y_2)$.}
\end{figure}
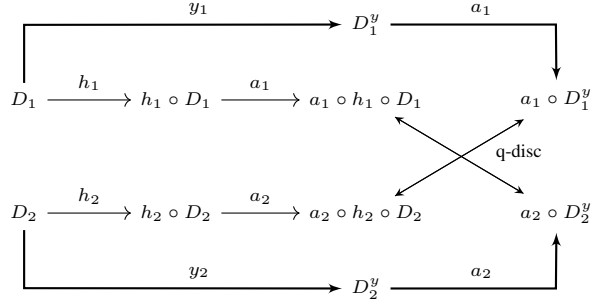
\end{small}

\begin{small}
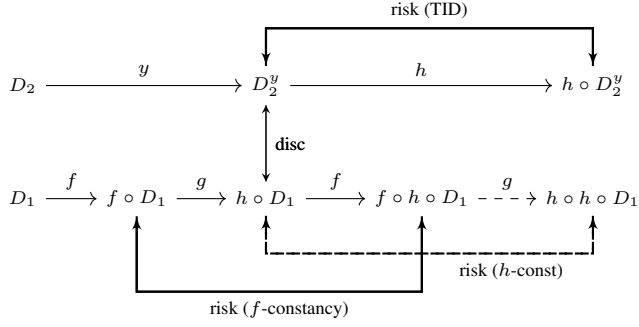
\begin{figure}[t]
\begin{center}
\begin{tikzpicture}[auto,scale=1]

	\node (in1) {\begin{scriptsize}$D_2$\end{scriptsize}};
    \node (out11) at ([shift={(3.2,0)}] in1) {\begin{scriptsize}$D^y_2$\end{scriptsize}};
    \node (out12) at ([shift={(7.5,0)}] in1) {\begin{scriptsize}$h \circ D^y_2$\end{scriptsize}};
    
    \node (in2) at ([shift={(0,-1.5)}] in1) {\begin{scriptsize}$D_1$\end{scriptsize}};
    \node (rep1) at ([shift={(1.5,-1.5)}] in1) {\begin{scriptsize}$f \circ D_1$\end{scriptsize}};
    \node (out21) at ([shift={(3.2,-1.5)}] in1) {\begin{scriptsize}$h \circ D_1$\end{scriptsize}};
    \node (rep2) at ([shift={(5.25,-1.5)}] in1) {\begin{scriptsize}$f \circ h \circ D_1$\end{scriptsize}};
    \node (out22) at ([shift={(7.5,-1.5)}] in1) {\begin{scriptsize}$h \circ h \circ D_1$\end{scriptsize}};
    
    \path[-stealth] (out11) edge node [right] {\begin{scriptsize}$\disc$\end{scriptsize}} (out21) ;
    \path[-stealth] (out21) edge node [right] {\begin{scriptsize}$\disc$\end{scriptsize}} (out11) ;
    
    \draw[->] (in1)  -- (out11) node[midway] {\begin{scriptsize}$y$\end{scriptsize}};
    \draw[->] (out11) -- (out12) node[midway] {\begin{scriptsize}$h$\end{scriptsize}};

    \draw[->] (in2) -- (rep1) node[midway] {\begin{scriptsize}$f$\end{scriptsize}};
    \draw[->] (rep1) -- (out21) node[midway] {\begin{scriptsize}$g$\end{scriptsize}};
    
    \draw[dashed,->] (rep2) -- (out22) node[midway] {\begin{scriptsize}$g$\end{scriptsize}};
    \draw[->] (out21) -- (rep2) node[midway] {\begin{scriptsize}$f$\end{scriptsize}};
    
    \path [l] (out11.north) -- ++(0,0.5)  -- node[pos=0.5] {\begin{scriptsize}risk (TID)\end{scriptsize}} ++(4.3,0) --  (out12.north);
    
    \path [l] (out12.north) -- ++(0,0.5)  -- ++(-4.3,0) --  (out11.north);

    \path [l] (rep1.south) -- ++(0,-1)  -- node[pos=0.5,below] {\begin{scriptsize}risk ($f$-constancy)\end{scriptsize}} ++(3.75,0) --  (rep2.south);
    
    \path [l] (rep2.south) -- ++(0,-1)  -- ++(-3.75,0) --  (rep1.south);
    
    \path [dashed,l] (out21.south) -- ++(0,-0.5)  -- node[pos=0.75,below] {\begin{scriptsize}risk ($h$-const)\end{scriptsize}} ++(4.3,0) --  (out22.south);
    
    \path [dashed,l] (out22.south) -- ++(0,-0.5)  -- ++(-4.3,0) --  (out21.south);
\end{tikzpicture}
\end{center}
\caption{\label{fig:DTfig} Domain Transfer. The learned function is $h = g \circ f$. The horizontal two-sided edges denote the TID and $f$-constancy risks that are used by the algorithm. The vertical two-sided edge stands for the discrepancy between $D^y_2$ and $h\circ D_1$. The dashed edges stand for the $h$-constancy risk that is required only in Thm.~\ref{thm:main3}, but is not necessary in Cor.~\ref{cor:dtn}.}
\end{figure}
\end{small}

\section{Domain Transfer}

In the cross domain transfer problem, the task is to learn a generative function that transfers samples from the input domain $\cal X$ to the output domain domain $\cal Y$. It was recently presented in~\cite{DBLP:journals/corr/TaigmanPW16}, where a GAN based solution was able to convincingly transform face images into caricatures from a specific domain. In comparison to the superficially related problem of style transfer~\cite{Gatys_2016_CVPR}, the cross domain problem was shown to be more semantic, in the sense that it adheres to the structure of the output domain.

The learning algorithm is provided with only two unlabeled datasets: one includes i.i.d samples from the input distribution and the second includes i.i.d samples from the output distribution. Formally, we have two distributions, $D_1$ and $D_2$, and a target function, $y$. The algorithm has access to the following two datasets,
\begin{small}
\begin{equation}
\begin{aligned}
\{x_i\}^{m}_{i=1} &\text{ such that } x_i \stackrel{\textnormal{i.i.d}}{\sim} D_1\\
\{y(x_j)\}^n_{j=1} &\text{ such that } x_j \stackrel{\textnormal{i.i.d}}{\sim} D_2 
\end{aligned}
\end{equation}
\end{small}
This is illustrated in Fig.~\ref{fig:illustration}(d). The goal is to fit a function $h=g\circ f\in \mathcal{H}$ that is closest to,
\begin{small}
\begin{equation}
\label{eq:32}
\begin{aligned}
\inf_{h\in \mathcal{H}} R_{D_1}[h,y]
\end{aligned}
\end{equation}
\end{small}

It is assumed that: (i) $f$ is a fixed pre-trained feature map and, therefore, $\mathcal{H} = \left\{g \circ f \big\vert g \in \mathcal{H}_2\right\}$; and (ii) $y$ is idempotent, i.e, $y \circ y \equiv y$. 

For example, in~\cite{DBLP:journals/corr/TaigmanPW16}, $f$ is the DeepFace representations function~\cite{taigman2014deepface} and the function $y$ maps face images to emoji caricatures. In addition, applying $y$ on an emoji gives the same emoji. 

Note that according to the terminology of~\cite{DBLP:journals/corr/TaigmanPW16}, $D_1$ and $D_2$ are the source and target distributions respectively. However, this conflicts with the terminology of domain adaptation, since the loss in Eq.~\ref{eq:32} is measured over $D_1$. In domain adaptation, loss is measured over the target distribution. 

We denote $D^y_2 := y \circ D_2$. The following bound is illustrated in Fig.~\ref{fig:DTfig}.
\begin{theorem}[Domain transfer bound]\label{thm:main3}
If Assumption~\ref{assmp:triangle} holds, then for all $h=g\circ f \in \mathcal{H}$,
\begin{small}
\begin{equation}
\begin{aligned}
R_{D_1}[h,y] \lesssim &
R_{D^y_2}[h,\Id] + R_{D_1}[h\circ h,h]\\
&+R_{D_1}[f \circ h,f]
+ \disc_{\mathcal{H}}(D^y_2, h \circ D_1)+\lambda 
\end{aligned}
\end{equation}
\end{small}
Here, $\lambda = \min_{h\in \mathcal{H}} \left\{R_{D^y_2}[h,\Id] + R_{D_1}[h,y]\right\}$ and $h^*=g^*\circ f$ is the corresponding minimizer. We also assume that $g^*$ is Lipschitz with respect to $\ell$.
\end{theorem}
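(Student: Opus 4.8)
The plan is to follow the same chaining template as Theorems~\ref{thm:main1} and~\ref{thm:main2}: peel off an ``unmeasurable'' remainder into $\lambda$ with the factor triangle inequality, and then rewrite what is left until the learnable quantities --- the discrepancy $\disc_{\mathcal{H}}(D^y_2, h\circ D_1)$, the TID risk $R_{D^y_2}[h,\Id]$, the $f$-constancy risk $R_{D_1}[f\circ h,f]$ and the $h$-constancy risk $R_{D_1}[h\circ h,h]$ --- each appear in turn.

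First I would bring in the minimizer $h^*=g^*\circ f\in\mathcal{H}$ of $\lambda$ and apply Assumption~\ref{assmp:triangle} to get
\[
R_{D_1}[h,y]\;\lesssim\; R_{D_1}[h,h^*] + R_{D_1}[h^*,y],
\]
where $R_{D_1}[h^*,y]\le\lambda$ because the two non-negative summands defining $\lambda$ are attained at $h^*$. Everything then reduces to controlling $R_{D_1}[h,h^*]$. For this I would insert an extra application of $h$: using the factor triangle inequality twice through $h\circ h$ and $h^*\circ h$,
\[
R_{D_1}[h,h^*]\;\lesssim\; R_{D_1}[h\circ h,h] + R_{D_1}[h\circ h,h^*\circ h] + R_{D_1}[h^*\circ h,h^*].
\]
The first summand is already the $h$-constancy term. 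For the last one, $h^*=g^*\circ f$ gives $\ell(h^*(h(x)),h^*(x)) = \ell(g^*(f(h(x))),g^*(f(x))) \le L\,\ell(f(h(x)),f(x))$ by the assumed Lipschitzness of $g^*$, so $R_{D_1}[h^*\circ h,h^*]\lesssim R_{D_1}[f\circ h,f]$, the $f$-constancy term.

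The crucial step is the middle summand. The change of variables $x\mapsto h(x)$ turns it into a risk on the pushforward distribution, $R_{D_1}[h\circ h,h^*\circ h] = R_{h\circ D_1}[h,h^*]$, and since $h,h^*\in\mathcal{H}$ the definition of discrepancy (Def.~\ref{def:discrepancy}) yields
\[
R_{h\circ D_1}[h,h^*]\;\le\; R_{D^y_2}[h,h^*] + \disc_{\mathcal{H}}(D^y_2, h\circ D_1).
\]
One more factor triangle inequality through $\Id$ gives $R_{D^y_2}[h,h^*]\lesssim R_{D^y_2}[h,\Id] + R_{D^y_2}[h^*,\Id]$, and $R_{D^y_2}[h^*,\Id]\le\lambda$ again by the definition of $\lambda$. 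Collecting the terms reproduces the stated bound; idempotency of $y$ is what keeps it from being vacuous, since together with $y\in\mathcal{H}$ it forces $\lambda=0$ and makes the TID term vanish at $h=y$.

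The part that has to be spotted is the decision to route through the pushforward $h\circ D_1$ by inserting an extra copy of $h$: this is exactly what converts the purely in-domain quantity $R_{D_1}[h,h^*]$ into the pair $(D^y_2, h\circ D_1)$ that the GAN-style discrepancy actually measures, at the cost of the two constancy terms. It also explains why the $h$-constancy risk $R_{D_1}[h\circ h,h]$ --- which is not part of the DTN objective --- shows up here, whereas Corollary~\ref{cor:dtn} can avoid it by a slightly different routing. The remaining work is only the bookkeeping of the constants $K$ (from Assumption~\ref{assmp:triangle}), $L$ (from the Lipschitz bounds on $g^*$) and the symmetry of $\ell$, all hidden inside ``$\lesssim$''.
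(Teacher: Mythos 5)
Your proposal is correct and follows essentially the same route as the paper's proof: the chain $h \to h\circ h \to h^{*}\circ h \to h^{*} \to y$ via the factor triangle inequality, the change of variables $R_{D_1}[h\circ h, h^{*}\circ h] = R_{h\circ D_1}[h,h^{*}]$ followed by the discrepancy step against $D^y_2$, the Lipschitzness of $g^{*}$ to produce the $f$-constancy term, and the final detour through $\Id$ to isolate $\lambda$. The only differences are cosmetic (the order in which the triangle inequalities are grouped, and bounding the two halves of $\lambda$ separately, which costs only a constant factor absorbed in $\lesssim$).
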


\begin{proof} By the factor triangle inequality, 
\begin{small}
\begin{equation}
\begin{aligned}
R_{D_1}&[h,y] 
\lesssim R_{D_1}[h\circ h,h]+R_{D_1}[h\circ h,y]\\
\lesssim& R_{D_1}[h\circ h,h]+R_{D_1}[h\circ h,h^* \circ h] \\ 
&+ R_{D_1}[h^* \circ h,y]\\
=& R_{D_1}[h\circ h,h]+R_{h\circ D_1}[h,h^*] + R_{D_1}[h^* \circ h,y]\\
\end{aligned}
\end{equation}
\end{small}
By the definition of discrepancy,
\begin{small}
\begin{equation}
\begin{aligned}
R_{D_1}[h,y]\lesssim& R_{D_1}[h\circ h,h]+R_{D^y_2}[h,h^*] \\ 
&+ R_{D_1}[h^* \circ h,y] + \disc_{\mathcal{H}}(D^y_2, h \circ D_1)\\
=& R_{D_1}[h\circ h,h]+R_{D^y_2}[h,h^*] \\ 
&+ R_{D_1}[g^* \circ f \circ h,y] + \disc_{\mathcal{H}}(D^y_2, h \circ D_1)\\
\end{aligned}
\end{equation}
\end{small}
By the factor triangle inequality,
\begin{small}
\begin{equation}
\begin{aligned}
R_{D_1}&[h,y]\lesssim R_{D_1}[h\circ h,h]+R_{D^y_2}[h,h^*] \\ 
&+ R_{D_1}[g^* \circ f \circ h,g^* \circ f]+ R_{D_1}[g^* \circ f,y] \\
&+ \disc_{\mathcal{H}}(D^y_2, h \circ D_1)\\
\end{aligned}
\end{equation}
\end{small}
Since $\ell(g^*(a),g^*(b)) \leq L \cdot \ell(a,b)$, we have,
\begin{small}
\begin{equation}
\begin{aligned}
R_{D_1}&[h,y]\lesssim R_{D_1}[h\circ h,h]+R_{D^y_2}[h,h^*] \\
&+ R_{D_1}[f \circ h,f]+ R_{D_1}[h^*,y] \\
&+ \disc_{\mathcal{H}}(D^y_2, h \circ D_1)\\ 
\end{aligned}
\end{equation}
\end{small}
Again, by the factor triangle inequality,
\begin{small}
\begin{equation}
\begin{aligned}
R_{D_1}&[h,y]\lesssim R_{D_1}[h\circ h,h]+R_{D^y_2}[h,\Id] \\ 
&+ R_{D_1}[f \circ h,f]+ R_{D^y_2}[h^*,\Id]  \\
&+ R_{D_1}[h^*,y]+ \disc_{\mathcal{H}}(D^y_2, h \circ D_1)\qedhere
\end{aligned}
\end{equation}
\end{small}
\end{proof}

\begin{corollary}\label{cor:dtn} In the setting of Thm.~\ref{thm:main3}. If $\mathcal{H}_2$ is a universal Lipschitz hypothesis class, then for all $h=g\circ f \in \mathcal{H}$,
\begin{small}
\begin{equation}
\begin{aligned}
R_{D_1}[h,y] \lesssim &
R_{D^y_2}[h,\Id] +R_{D_1}[f \circ h,f]\\
&+ \disc_{\mathcal{H}}(D^y_2, h \circ D_1)+\lambda 
\end{aligned}
\end{equation}
\end{small}
\end{corollary}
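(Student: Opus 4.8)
The plan is to obtain Corollary~\ref{cor:dtn} as an immediate consequence of Theorem~\ref{thm:main3}: the only difference between the two statements is the absence of the term $R_{D_1}[h\circ h,h]$ in the corollary, so it suffices to show that, once $\mathcal{H}_2$ is assumed to be a universal Lipschitz class, this term is dominated (up to a multiplicative constant, which is hidden by $\lesssim$) by the term $R_{D_1}[f\circ h,f]$ that is already present in the bound of Theorem~\ref{thm:main3}.

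Concretely, I would first invoke Theorem~\ref{thm:main3} — its hypotheses are included in the setting of the corollary, since Assumption~\ref{assmp:triangle} is assumed throughout — to get
\[
R_{D_1}[h,y] \lesssim R_{D^y_2}[h,\Id] + R_{D_1}[h\circ h,h] + R_{D_1}[f\circ h,f] + \disc_{\mathcal{H}}(D^y_2, h\circ D_1) + \lambda .
\]
Then, writing $h = g\circ f$ with $g\in\mathcal{H}_2$, I use that $\mathcal{H}_2$ is a universal Lipschitz class: there is $L>0$ such that $\ell(g(a),g(b))\le L\cdot\ell(a,b)$ for all $a,b$ and all $g\in\mathcal{H}_2$. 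Since $h\circ h = g\circ f\circ g\circ f$ and $h = g\circ f$, for every $x$ we have $\ell\big(h(h(x)),h(x)\big) = \ell\big(g(f(h(x))),g(f(x))\big) \le L\cdot\ell\big(f(h(x)),f(x)\big)$, and taking the expectation over $x\sim D_1$ gives $R_{D_1}[h\circ h,h]\le L\cdot R_{D_1}[f\circ h,f]$. Substituting this into the displayed inequality and absorbing $L$ into $\lesssim$ merges $R_{D_1}[h\circ h,h]+R_{D_1}[f\circ h,f]$ into a single multiple of $R_{D_1}[f\circ h,f]$, which is precisely the claimed bound.

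There is no real obstacle here; the only point requiring a little care is to peel off the \emph{outer} copy of $g$ in $h\circ h$ (so that the residual comparison is between $f\circ h$ and $f$, not something involving the inner structure of $h$). This is in fact the same manipulation already used inside the proof of Theorem~\ref{thm:main3} when passing from $R_{D_1}[g^*\circ f\circ h,\,g^*\circ f]$ to $R_{D_1}[f\circ h,f]$ via the Lipschitzness of $g^*$; here it is applied to $g$ instead of $g^*$, which is legitimate exactly because $\mathcal{H}_2$ is now assumed to be universally Lipschitz.
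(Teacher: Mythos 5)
Your proposal is correct and is essentially identical to the paper's proof: both invoke Theorem~\ref{thm:main3} and then absorb $R_{D_1}[h\circ h,h]=R_{D_1}[g\circ f\circ h,\,g\circ f]\lesssim R_{D_1}[f\circ h,f]$ by peeling off the outer $g$ via the universal Lipschitzness of $\mathcal{H}_2$. No issues.
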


\begin{proof}
Since $\mathcal{H}_2$ is a universal Lipschitz hypothesis class,
\begin{small}
\begin{equation}
\begin{aligned}
R_{D_1}[h\circ h,h] &= R_{D_1}[g \circ f \circ h, g \circ f] \\
&\lesssim  R_{D_1}[f \circ h, f]
\end{aligned}
\end{equation}
\end{small}
Therefore, by Thm.~\ref{thm:main3} we obtain the desired bound.
\end{proof}

The last corollary matches the method of~\cite{DBLP:journals/corr/TaigmanPW16}. The first term $R_{D^y_2}[h,\Id]$ is the $L_{\text{TID}}$ part of their loss, which, for the emoji generation application, states that emoji caricatures are mapped to themselves. The second term $R_{D_1}[f \circ h,f]$ corresponds to their $L_{\text{CONST}}$ term, which states that the DeepFace representations of the input face image and the resulting caricature are similar. In our analysis this constancy is not assumed as part of the problem formulation, instead it stems from the idempotency of $y$.

The third term $\disc_{\mathcal{H}}(D^y_2, h \circ D_1)$ is the algorithm's GAN element that compares generated caricatures to the training dataset of the unlabeled emoji. Note that in~\cite{DBLP:journals/corr/TaigmanPW16}, a ternary GAN is used, which also involves the distribution of generated images where the input is from $D_2$. However, no clear advantage to the ternary GAN over the binary GAN is observed. Lastly the $\lambda$ factor captures the complexity of the hypothesis class $\cal H$, which depends on the chosen architecture of the neural networks that instantiate $g$.  

\section{Conclusion}
Problems involving domain shift receive an increasing amount of attention, as the field of machine learning moves its focus away from the vanilla supervised learning scenarios to new combinations of supervised, unsupervised and transfer learning. 

We analyze several new unsupervised and semi-supervised paradigms. While the ODA problem is, as far as we know,  completely novel, two other problems we define provide theoretical foundations to recent algorithms.

\bibliographystyle{plain}
\bibliography{ODA}
\end{document}